\newcommand{\LS}{\mathcal{L}}
\newcommand{\D}{\mathcal{D}}
\newcommand{\w}{\boldsymbol{w}}
\DeclareMathOperator{\E}{\mathbb{E}}
\DeclareMathOperator{\KL}{\boldsymbol{\mathbf{KL}}}
\newcommand{\R}[1]{\mathbb{R}^{#1}}
\newcommand{\norm}[1]{\left\|{#1}\right\|}
\newtheorem{theorem}{Theorem}
\newtheorem{lemma}[theorem]{Lemma}
\newtheorem{assumption}[theorem]{Assumption}
\newcommand{\cmark}{\ding{51}}%
\newcommand{\xmark}{\ding{55}}%
\newcommand{\system}{\textsc{FedGKD}\xspace}
\newcommand{\systemled}{\textsc{FedGKD-Vote}\xspace}
\newcommand{\systemplus}{\textsc{FedGKD$^{+}$}\xspace}
\title{Local-Global Knowledge Distillation in Heterogeneous Federated Learning with Non-IID Data}
\author{Dezhong Yao, Wanning Pan \\
Huazhong University of Science and Technology \\
\texttt{\{dyao,pwn\}@hust.edu.cn} \\
\And
Yutong Dai\\
Lehigh University \\
\texttt{yud319@lehigh.edu}
\And
Yao Wan, Xiaofeng Ding, Hai Jin\\
Huazhong University of Science and Technology \\
\texttt{\{wanyao,xfding,hjin\}@hust.edu.cn}
\And
Zheng Xu \\
Google Research \\
\texttt{xuzheng@google.com} \\
\And
Lichao Sun \\
Lehigh University \\
\texttt{lis221@lehigh.edu}
}
\begin{document}

\maketitle

\begin{abstract}
Federated learning enables multiple clients to collaboratively learn a global model by periodically aggregating the clients' models without transferring the local data. However, due to the heterogeneity of the system and data, many approaches suffer from the {\em ``client-drift''} issue that could significantly slow down the convergence of the global model training. As clients perform local updates on heterogeneous data through heterogeneous systems, their local models drift apart. To tackle this issue, one intuitive idea is to guide the local model training by the global teachers, {\em i.e.,} past global models, where each client learns the global knowledge from past global models via adaptive knowledge distillation techniques. Coming from these insights, we propose a novel approach for heterogeneous federated learning, namely \system, which fuses the knowledge from historical global models for local training to alleviate the {\em ``client-drift''} issue. In this paper, we evaluate \system with extensive experiments on various CV/NLP datasets ({\em i.e.,} CIFAR-10/100, Tiny-ImageNet, AG News, SST5) and different heterogeneous settings. The proposed method is guaranteed to converge under common assumptions, and achieves superior empirical accuracy  in fewer communication runs than five state-of-the-art methods.

\end{abstract}

\section{Introduction}
Clients collaboratively learn a global model without transferring their local data in federated learning (FL). The data distribution on clients are often non-IID (independent and identically distributed) in practice, also known as data heterogeneity, which is a challenge in FL and has drawn much attention
in recent studies~\citep{karimireddy2019scaffold, wang2020tackling,wang2021field}. 
Moreover, heterogeneity can be caused by the empirical sampling size of clients data, the joint data-label distribution, communication efficiency  and capacities of computation, and many others~\citep{kairouz2019advances}.
Since the local data on each client are not sampled from the global joint distribution of all clients~\citep{li2019convergence}, the local objectives are different and may not share common minimizers.
Even every communication starts from the same global model, clients' local models will \textit{drift} towards the minima of their local objectives, and the aggregated global model may not be the optimum of the global objective.
Such {\em ``client-drift''} phenomenon not only degrades performance, but also increase the number of communication rounds~\citep{karimireddy2019scaffold}. 
Thus, it is desirable to explicitly handle the heterogeneity during training.

In light of the client-drift problem, many efforts have been made mainly from two aspects. (1) One focuses on using additional data information to address the model drift issue induced by non-IID data. As illustrated in Tab.~\ref{tbl:alg_comp}, FedDistill~\citep{seo2020federated} shares local output logits on training dataset, and FedGen~\citep{zhu2021data} shares local label count information. While the additional information sharing methods may face the risk of privacy leakage. FedDF~\citep{lin2020ensemble} requires additional proxy data in server for ensemble distillation. Though the efficacy of model aggregation is improved, the additional proxy dataset may not always be available. And the inherent heterogeneity among local models is not fully addressed by only refining the global model, which may affect the quality of the knowledge ensemble, especially when the data distribution shift exists~\citep{khoussainov2005ensembles}.  (2) Another line of work aims at local-side regularization. FedProx~\citep{li2018federated} adds a proximal term in the local objective, and SCAFFOLD~\citep{karimireddy2019scaffold} uses control variate to correct the client-drift. FedDyn~\citep{acar2020federated} proposes a dynamic regularization to ensure the local optima is consistent. By injecting a projection head to the model, MOON~\citep{li2021model} uses the model-level contrastive learning method to correct the local training of individual clients.
In this paper, we try to use knowledge distillation technique to solve the client-drift problem, in which does not need to share additional information, use additional proxy data, or modify the model structure.

\begin{table}[!t]
\centering
\caption{Comparison of federated learning approaches.}
\renewcommand\arraystretch{1.1}

\begin{tabular}{lccc}
\toprule
\textbf{Method}  & \textbf{{Add. Inf. Sharing}} & \textbf{{Proxy Data}} & \textbf{{Model Modification}}  \\
\hline
  FedAvg~\citep{mcmahan2017communication}  & \xmark & \xmark & \xmark \\
  FedProx~\citep{li2018federated}  & \xmark & \xmark & \xmark\\
  FedDistill~\citep{seo2020federated} & \cmark & \xmark & \xmark\\
  FedGen~\citep{zhu2021data} & \cmark & \xmark & \xmark \\
  FedDF~\citep{lin2020ensemble} & \xmark & \cmark & \xmark \\
  MOON~\citep{li2021model}  & \xmark & \xmark & \cmark\\
  \system (Ours) & \xmark & \xmark & \xmark \\
  \bottomrule
  \end{tabular}
 \label{tbl:alg_comp}
\end{table}

Motivated by the aforementioned problems, we propose a novel ensemble-based global knowledge distillation method, named \system, which fuses the knowledge from past global models to tackle the client drift in training. Specifically, \system combines multiple  classifiers (partial knowledge) to form a compact representation of global knowledge and transfer it to clients, guiding local model training. 
A knowledge distillation loss based on compact global knowledge representation is imposed on each client to mitigate the client-drift issue.
Overall, the primary contributions of this paper are as follows.
\begin{itemize}
    \item We introduce an ensemble-based knowledge distillation technique to transfer the information of historical global models to local model training for preventing over-biased local model training on non-IID data distribution.
    \item We provide a generalized and simple method for federated knowledge distillation, which does not require additional information sharing, proxy data, or model modification. The communication cost and training overhead are at the same complexity level as that of FedAvg. Moreover, the proposed method is compatible with many privacy protection methods like differential privacy in federated learning.
    \item We use extensive experiments and analysis on various datasets (i.e. CIFAR-10/100, Tiny-ImageNet, Ag News and SST-5) and settings (i.e. different level of non-IID data distribution and participation ratio) to validate the effectiveness of the proposed \system, and compare \system with several state-of-the-art methods.
\end{itemize}

\section{Background and Preliminaries}


\paragraph{Federated Learning}(FL) is a communication-efficient distributed learning framework where a subset of clients performs local training before aggregation~\citep{KonecnyMRR16, mcmahan2017communication}, which can train a machine learning model without sharing clients' private data~\citep{bonawitz2017practical}, therefore the data privacy will be preserved. In FL, one critical problem is on handling the unbalanced, non-independent and identically distributed (non-IID) data, which are very common in the real world~\citep{zhao2018federated,sattler2019robust,li2019convergence}.
At the application level, FL has been applied to a wide range of real applications, such as  biomedical~\citep{brisimi2018federated}, healthcare~\citep{xu2021federated}, finance~\citep{yang2019ffd}, and smart manufacturing~\citep{hao2019efficient}.

Suppose there are $N$ clients, denoted as $C_1,\ldots,C_N$. Client $C_k$ stores a local dataset $D_k$ with a distribution $\D_k$. In general, FL aims to learn a global model weight $\w$ over the dataset $\mathcal{D} = \cup \{D_k\}_{k=1}^{n_K}$, where $n_K$ is the number of participants in each training round. The objective of FL is to solve the optimization~\citep{mcmahan2017communication}:
\begin{equation}
  \min_{\w \in \R{d}} f(\w) := \sum^{K}_{k=1} p_k F_k(\w)\,, \label{eq:fl}
\end{equation}
where $p_k=n_k/\sum_{k=1}^Kn_k$, $n_k=|D_k|$ and $F_k(\w)=\frac{1}{n_k}\sum_{i=1}^{n_k}\mathcal{L}(h_k(\w;x_{ki}),y_{ki})$ is the local objective function at the $k$-th client. Here, $h_k:\R{d}\times \R{d'}\to \R{\mathcal{C}}$ represents the local model and $\mathcal{L}: \R{\mathcal{C}}\times \R{\mathcal{C}} \to \R{}_{+}$ is the loss function. Moreover, $(x_{ki}, y_{ki})$ corresponds to a sample point from the $k$-th client for all $i\in[n_k]$ and $k\in[K]$.

\paragraph{Knowledge Distillation}(KD) is referred as teacher-student paradigm that a cumbersome but powerful teacher model transfers its knowledge through distillation to a lightweight student model~\citep{buci2006model}. KD aims to minimize the discrepancy between the logits outputs from the teacher model $\w^T$ and the student model $\w^S$ with a dataset $D$~\citep{hinton2015distilling}.
The discrepancy could be measured by Kullback-Leibler divergence:
\begin{equation}
    \min_{\w^S} \E_{x \sim \mathcal{D}}[h(\w^T;x)||h(\w^S;x)]. \label{eq:kl}
\end{equation}
KD has shown great success in various FL tasks. First, many studies focus on data heterogeneity (non-IID) problem. 
FedDistill~\citep{seo2020federated} performs KD to refine the local training by sharing local logits. 
FedGen~\citep{zhu2021data} learns a global generator to aggregate the local information and distills global knowledge to users. 
FedDF~\citep{lin2020ensemble} uses the averaged logits of local models on proxy data for aggregation. This approach treats the local models as teachers and transfers their knowledge into a student (global) model to improve its generalization performance. 
Second, some works aim to reduce the communications cost. CFD~\citep{sattler2020communication} sends the soft-label predictions on a shared public data set to server to save communication cost. 
Besides these, FedLSD~\citep{lee2021preservation} and FedWeIT~\citep{yoon2021federated} try to solve continual learning problem in FL. In this paper, we focus on non-IID data scenarios.

In general, previous works require either additional local information~\citep{seo2020federated,zhu2021data} or proxy data~\citep{lin2020ensemble,sattler2021fedaux,sattler2020communication}. However, as discussed in~\citep{mcmahan2017communication,bonawitz2017practical,li2018federated}, the partial global sharing of local information violates privacy, and using globally-shared proxy data should be cautiously generated or collected. Comparing with previous methods, our approach does not require any local information sharing or proxy data.

\section{Local-Global Knowledge Distillation in Heterogeneous Federated Learning}

\subsection{Motivation}
\system is motivated by an intuitive idea: the global model is able to extract a better feature representation than the model trained on a skewed subset of clients.
More precisely, under non-IID scenarios, local datasets across clients are nonidentical distributed, so local data fail to represent the overall global distribution. For example, given a model trained on airplane and bike images, we cannot expect the features learned by the model to recognize birds and dogs. Therefore, for non-IID data, we should control the drift and bridge the gap between the representations learned by the local model and the global model. KD has been verified to be a powerful knowledge transferring method~\citep{hinton2015distilling}. Additionally, the ensemble~\citep{PolyakEnsemble} of multiple historical global models can further enhance the power of global knowledge to handle the local model drift problem by providing a more comprehensive view of the global data distribution.

\subsection{\system}
Our core target is to reduce local model drift through global model distillation. An overview of this procedure is illustrated in Fig.~\ref{fig:framework}. We first introduce a base method that uses the latest global model to guide local training. Then we extend the method to use multiple historical global models for guidance.

\begin{figure}[t]
\centering
    \includegraphics[width=0.6\linewidth]{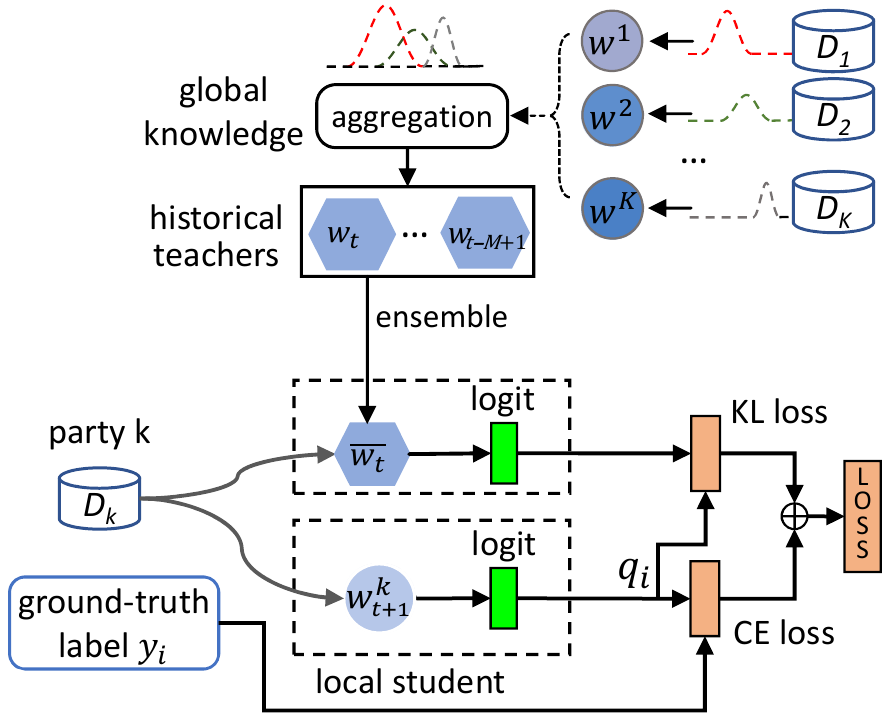}
    \caption{An overview of \system: an ensemble model $\overline{\w_t}$ is learned by aggregating information from historical global models. The ensemble model is then sent to sampled client, whose knowledge is distilled to local models for a good feature distribution.}\label{fig:framework}
\end{figure}

\paragraph{Global Knowledge Distillation}
\system\ is proposed for transferring the global knowledge to local model training, where each client's data has different data distribution.
Due to the nature of non-IID data distribution,
the local model trained may be biased towards its own dataset, i.e, over-fitting its local dataset, which causes the complexity and difficulty of global aggregation.
In order to mitigate the data and information gaps among local models, a novel idea stems from the knowledge distillation technique that regularizes local training with global model output.
Suppose the $\w_t$ is the parameter of the global model at round $t$, we formulate the local objective of \system\ for the $k$th client at $t+1$ round as
\begin{align}
    &\min_{\w} { \underbrace{ F_k(\w) \vphantom{\sum_{i=1}^{n_k}} }_\text{CE loss} + 
    \underbrace{ \frac{\gamma}{2n_k}\sum_{i=1}^{n_k} \KL(h_k(\w_t,x_{ki}) || h_k(\w,x_{ki}))}_\text{KD loss} }, \label{eq:obj_sin} 
\end{align}
where $\gamma > 0$.




\paragraph{Historical Global Knowledge Distillation}

Motivated by~\citep{xu2020improving}, we further utilize the averaged parameters of historical global models for ensemble.
We define $\overline{\w_t} = \frac{1}{M}\sum_{m=1}^{M} \w_{t-m+1}$ as the averaged weight of the global, where $M$ is the buffer size. In round $t+1$, $M$ latest global models are averaged to a fused model $\overline{\w_t}$ on server and send to (sampled) clients. As shown in Fig.~\ref{fig:framework}, the local model $\w^k_{t+1}$ is trained under the guidance of the latest $M$ round global models to avoid local bias training. 
The modified local objective function for the $k$th client is defined as 
\begin{align}
  &\min_{\w} {F_k(\w) + \frac{\gamma}{2n_k}\sum_{i=1}^{n_k} \KL(h_k(\overline{\w_t},x_{ki}) || h_k(\w,x_{ki}))}.\label{eq:obj_avg}
\end{align}
We also develop the \systemled for regularizing the local models with past $M$ models output logits and the local objective is given as
\begin{equation}
  \min_{\w} {F_k(\w)\!+\!\sum_{m=1}^{M}\!\frac{\gamma_m}{2n_k}\!\sum_{i=1}^{n_k}\KL(h_k(\w_{t\!-\!m\!+\!1}\!,x_{ki}) || h_k(\w\!,x_{ki}))} , \label{eq:obj_his}
\end{equation}
where $\gamma_1,\cdots, \gamma_M$ are the distillation coefficients for different models.
Note that in numerical experiments, we found that using the global model from the last round  can already achieve a good performance, which is also easier to set up the hyper-parameter. 

In Algorithm~\ref{alg:fedgkd}, we give a detailed description for \system and \systemled. Both of \system and \systemled need to buffer the global models on the server, and \system average the global models weights directly so that only  $\overline{\w_t}$ and $\w_t$ should be sent to the clients. The communication cost of \system is twice compared to FedAvg if $M>1$ and the same as FedAvg if $M=1$. For \systemled, the communication cost would be $M$ times. Consider the communication rounds $T\gg M$ in realistic, the communication cost of \systemled is still $\mathcal{O}(T)$. Specially, for NLP fine-tuning tasks, we'd suggest set M as 1.

\begin{algorithm}[tb]
\begin{algorithmic}[1]
	\State {\bf Notations.} total number of clients $K$, server $S$, total communication rounds $T$, local epochs $E$, fraction of participating clients $C$, learning rate $\eta$, buffer size $M$, $\mathcal{B}$ is a set holding client’s data sliced into batches of size $B$. 
 	\State {\it //On the server side}
 	\Procedure{ServerExecution}{} \Comment{Server Model Aggregation}
	    \State Initial global weight $\w_0$
	    
	    \For{each communication round $t= 1,\ldots,T$}
	        \State $K^t \leftarrow$ random sample a set of $C \cdot K$  clients
    	    \If{\systemled}
    	    \State Send the global weight $\w_t,\ldots,\w_{t-M+1}$ to the selected clients
    	    \Else
    	    \State Send the global weight $\w_t$ and ensembled global weight $\overline{\w_t}$ to the selected clients
    	    \EndIf
    	    \State Buffer the global model with M size
	    
	    \EndFor
		\For{each client $k \in K_t$ {\bf in parallel}}
		\State $\w_{t+1}^k \leftarrow 
		\textsc{ClientUpdate}(k, \w_t)$
		\State $\w_{t+1} \leftarrow \sum_{k=1}^K \frac{n_k}{n}\w_{t+1}^k$
		\EndFor
 	\EndProcedure
	
	\State {\it //On the data owners side}
	\Procedure{ClientUpdate}{$k, \w_t$}\Comment{Local Model Training}
	\State $\w \leftarrow \w_t$
	\For{each local epoch $i = 1,\ldots,E$}
	    \For{each batch $b\in \mathcal{B}$}
		\State $\w \leftarrow \w - \eta \cdot \triangledown \LS(\w, b)$ \Comment{Update Local Weights via Eq.\ref{eq:obj_avg} or Eq.\ref{eq:obj_his}} 
		\EndFor
	\EndFor
	\State {\bf Return} $\w$ back to server
	\EndProcedure
\end{algorithmic}
\caption{\system: Global Knowledge Distillation in Federated Learning}
\label{alg:fedgkd}
\end{algorithm}

\section{\system Analysis}
In this section, we provide multiple perspectives to understand our proposed approach.

\subsection{Convergence Analysis}

Before formally state the convergence result, we introduce following notations. For any positive integer $a>0$, denote $[a]=\{1, 2,\ldots, a\}$. For any vector $v$, $[v]_j$ represents its $j$th coordinate. To perform convergence analysis, assumptions made on the properties of the client functions and the Algorithm are summarized in Assumption~\ref{ass.fun} and Assumption~\ref{ass.algo} respectively.


 \begin{assumption}\label{ass.fun}
 \item
 \begin{enumerate}
     \item (Lower-bounded eigenvalue) For all $k\in[K]$, $F_k(\w)$ is L-smooth, i.e., $\norm{\nabla F_k(\w) - \nabla F_k(\w)}\leq L\norm{\w-\w'}$ for some $L>0$ and all $(\w,\w)\in \R{d}\times\R{d}$. Furthermore, we assume the minimal eigenvalue of the Hessian of the client loss function $\nabla^2 f_k(w)$ is uniformly bounded below by a constant $\lambda_{\min}\in\mathbb{R}$.
     \item (Bounded dissimilarity) For all $k\in[K]$ and any $\w\in\R{d}$, $\E_k[\norm{\nabla F_k(\w)}^2]\leq B^2\norm{\nabla f(\w)}^2,$ where the expectation is taken with respect to the client index $k$.
    \item For all $k\in[K]$, $h_k$ outputs a probabilistic vector, i.e, $[h_k(.;.)]_j\geq 0$ and $\sum_{j=1}^{C}[h_k(.;.)]_j=1$ for all $j\in[C]$. Furthermore, we assume for any data point $x\in\R{d'}$, $\w\in\R{d}$, and $k\in[K]$, there exits a constant $\delta>0$ such that $\min_{j\in\{1,\cdots, C\}} [h_k(\w,x)]_j \geq \delta >0$. And there exits a constant $L_h>0$, $|h_k(\w,x) - h_k(\w',x)|\leq L_h \norm{\w-\w'}$ for any  $(\w,\w)\in \R{d}\times\R{d}$. 
 \end{enumerate}
 \end{assumption}
 We remark that Assumption~\ref{ass.fun}.1 and Assumption~\ref{ass.fun}.2 are also made in \citep{li2018federated}. Assumption~\ref{ass.fun}.3 is not stringent. For example, consider $h_k$ being a neural network for a classification task, and let the last layer being softmax, which is 1-Lipschtiz. As long as other layers (see \citep{NEURIPS2018_d54e99a6} for examples) are Lipschtiz, then by the fact that finite composition of Lipschitz continuous function gives the Lipschitz function, $h_k$ is also Lipschitz. Indeed $F_k$ is already assumed to be L-smooth, which implies $h_k$ is locally Lipschtiz and we only require $h_k$ to be Lipshctiz along the iterate sequence generated by the algorithm. 
 
\begin{assumption}\label{ass.algo}
  At the $t$-th round, the $k$-th client (if selected) solves the optimization problem  $\w^{k}_{t+1}\approx\arg\min_{\w\in\R{d}}\;m(\w;\w_{t})$ approximately with 
  \begin{equation}
    m(\w;\w_{t}):=F_k(\w) + \frac{\gamma}{2n_k}\sum_{i=1}^{n_k} \KL(h_k(\w_t,x_{ki}) || h_k(\w,x_{ki})), \nonumber
  \end{equation}
  such that 
\begin{equation}\label{eq:inexactness}
    \norm{\nabla F_k(\w^{k}_{t+1}) + \frac{\gamma L_h}{\delta}(\w_{t+1}^k-\w^t)}\leq \eta \norm{\nabla F_k(\w_{t})}\,,
\end{equation}
where $\eta\in [0,1)$ and $\gamma>0$.
\end{assumption}
The Lemma~\ref{lemma:wellposed} in the supplementary justifies the well-posedness of the Assumption~\ref{ass.algo}. The following theorem gives the rate of convergence to a stationary point.

\begin{theorem}\label{thm:convergence}
Let Assumption~\ref{ass.fun} and Assumption~\ref{ass.algo} hold.  Assume for each round, a subset of $S_t$ clients are selected, with $|S_t|=S$ and the $k$th client is selected with the probability $p_k$. If $\gamma, \eta$, and $S$ are chosen to satisfy:
\begin{enumerate}
    \item $\kappa := \frac{\gamma L_h}{\delta} + \lambda_{\min}>0$;
    \item $\rho>0$, where
    \begin{align*}
         \rho & = \frac{\delta}{\gamma L_h}\left(1-\eta B-\frac{LB(1+\eta)}{\kappa}\right) \\
         & - \frac{1}{\kappa}\left(\frac{\sqrt{2}B(1+\eta)}{\sqrt{S}} + \frac{L(1+\eta)^2B^2}{2\kappa} \right.
         + \left. \frac{(2\sqrt{2S}+2)LB^2(1+\eta)^2}{\kappa S}\right),
    \end{align*}

\end{enumerate}
then after $T$ rounds,
$$
\min_{t\in[T]}\E[\norm{\nabla f(w^t)}] \leq \frac{f(w^0)-f(w^*)}{\rho T}.
$$
\end{theorem}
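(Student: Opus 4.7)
The plan is to establish a one-round descent inequality of the form $\E[f(\w_{t+1})] \leq f(\w_t) - \rho\,\norm{\nabla f(\w_t)}^2$ and then telescope over $t=0,\dots,T-1$; dividing by $T$ and taking the minimum in $t$ then yields the advertised $\mathcal{O}(1/T)$ rate (I suspect the theorem statement is missing a square, since the dimensional scaling of $\rho$ is that of a step size, consistent with a quadratic gradient-squared descent rather than a first-power one). The overall structure mirrors the FedProx analysis of Li et al.: $\gamma L_h/\delta$ plays the role of the proximal parameter, $\eta$ shrinks the effective step size through inexactness, and partial participation contributes sampling-variance corrections controlled by $S$.

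The first ingredient is a local contraction estimate. Rewriting the inexactness condition as the identity $\nabla F_k(\w^k_{t+1}) + \tfrac{\gamma L_h}{\delta}(\w^k_{t+1}-\w_t)=e_k$ with $\norm{e_k}\leq \eta\norm{\nabla F_k(\w_t)}$, I would subtract $\nabla F_k(\w_t)$ from both sides, take the inner product with $\w^k_{t+1}-\w_t$, and invoke the Hessian lower bound $\nabla^2 F_k \succeq \lambda_{\min} I$ from Assumption~\ref{ass.fun}.1 to get
\begin{equation*}
\kappa\,\norm{\w^k_{t+1}-\w_t}^2 \;\leq\; \norm{e_k-\nabla F_k(\w_t)}\cdot\norm{\w^k_{t+1}-\w_t} \;\leq\; (1+\eta)\,\norm{\nabla F_k(\w_t)}\cdot\norm{\w^k_{t+1}-\w_t},
\end{equation*}
so that $\norm{\w^k_{t+1}-\w_t}\leq \frac{1+\eta}{\kappa}\norm{\nabla F_k(\w_t)}$. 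The hypothesis $\kappa>0$ is exactly what makes this bound useful; I would also keep the identity $\w^k_{t+1}-\w_t = \frac{\delta}{\gamma L_h}\bigl(e_k - \nabla F_k(\w^k_{t+1})\bigr)$ for substitution later.

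For the global descent, I would apply $L$-smoothness of $f$, take conditional expectation with respect to the sampled set $S_t$, and split $\w_{t+1}-\w_t$ as the conditional mean $\sum_k p_k(\w^k_{t+1}-\w_t)$ plus a mean-zero fluctuation. Using $\sum_k p_k \nabla F_k(\w_t)=\nabla f(\w_t)$ together with the substitution identity above, the mean of the linear term produces the leading $-\frac{\delta}{\gamma L_h}\norm{\nabla f(\w_t)}^2$ contribution; the residuals from $e_k$ and from $\nabla F_k(\w^k_{t+1})-\nabla F_k(\w_t)$ (bounded by $L\norm{\w^k_{t+1}-\w_t}$) are controlled via the contraction estimate and then converted to global gradient norms by the dissimilarity bound $\E_k\norm{\nabla F_k(\w_t)}^2\leq B^2\norm{\nabla f(\w_t)}^2$ of Assumption~\ref{ass.fun}.2; this yields the factor $\frac{\delta}{\gamma L_h}\bigl(1-\eta B-\frac{LB(1+\eta)}{\kappa}\bigr)$ inside $\rho$. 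The fluctuation and the $\tfrac{L}{2}\norm{\w_{t+1}-\w_t}^2$ quadratic term are handled by standard sampling-variance inequalities of the form $\E\norm{X-\E X}^2 \leq \tfrac{1}{S}\E\norm{X}^2$ together with Jensen/Cauchy--Schwarz, producing respectively the three correction terms scaling as $1/\sqrt{S}$, $1/\kappa$, and $1/(\kappa S)$ that appear subtracted in $\rho$.

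The main obstacle I foresee is the bookkeeping in that last step: one must simultaneously track the inexactness error $e_k$, the gradient mismatch $\nabla F_k(\w^k_{t+1})-\nabla F_k(\w_t)$, the $B$-factor from dissimilarity, and the sampling-variance reduction from $S$ clients, and the precise numerical constants inside $\rho$ (the $\sqrt 2$, the $2\sqrt{2S}+2$, etc.) depend on the specific Young-inequality splits one chooses. A secondary subtlety is verifying that Assumption~\ref{ass.algo} is attained, for which one leans on Lemma~\ref{lemma:wellposed}; given Assumption~\ref{ass.fun}.3, the KL-distillation regularizer has a locally Lipschitz gradient (the $\frac{\gamma L_h}{\delta}$ in the inexactness is exactly this Lipschitz modulus of the distillation gradient via the $\delta$-lower bound on softmax outputs and the $L_h$-Lipschitzness of $h_k$), so an approximate minimizer satisfying \eqref{eq:inexactness} always exists. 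Once the one-round inequality $\E[f(\w_{t+1})]\leq f(\w_t)-\rho\norm{\nabla f(\w_t)}^2$ is in hand, summing over $t\in\{0,\ldots,T-1\}$, using $f(\w_T)\geq f(\w^*)$, and bounding $\min_t\E[\norm{\nabla f(\w^t)}^2]$ by the average delivers the claim.
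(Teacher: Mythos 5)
Your proposal follows essentially the same route as the paper, which simply invokes the FedProx analysis (Li et al., Theorem 4) with proximal parameter $\gamma L_h/\delta$ to obtain the per-round descent $\E_{S_t}[f(\w_{t+1})]\leq f(\w_t)-\rho\norm{\nabla f(\w_t)}^2$ and then telescopes; your sketch merely fills in the internals (local contraction via $\kappa>0$, dissimilarity bound, sampling variance) that the paper delegates to that citation. Your suspicion about the missing square is also well founded: the paper's own final step bounds $\min_{t}\E[\norm{\nabla f(\w_t)}]$ by the average of the \emph{squared} expected norms, so the clean statement should indeed carry $\norm{\nabla f(\w_t)}^2$.
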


\subsection{Global Knowledge Distillation for Biased Local Model Regularization}
As shown in Fig.~\ref{fig:cifar10-tsne}, we compare the local model and the global model of FedAvg, FedProx, and \system at the last round trained on CIFAR-10 by visualizing the penultimate layer representations of the test dataset. As the features are clustered by classes in T-SNE, a good model(classifier) can give more clear separation in the feature points. We notice that in Fig.~\ref{fig:cifar10-tsne_avg_client}, the local model can barely classify the entire classes due to the non-IID data distribution. And the global model is affected by the local model as shown in Fig.~\ref{fig:cifar10-tsne_avg_master}, since the boundary of different classes is vague due to the model drift. A clearer boundary can be observed in \system as shown in  Fig.~\ref{fig:cifar10-tsne_gkd_client}, where the improvement is brought by the knowledge distillation from global ensemble model, and we observe the improvement of local models leads to a better feature representation the global model of \system as shown in Fig.~\ref{fig:cifar10-tsne_gkd_master}.

\begin{figure*}[tb]
\centering
\subfloat[Global Model of FedAvg]{\label{fig:cifar10-tsne_avg_master}
\includegraphics[width=1.72in]{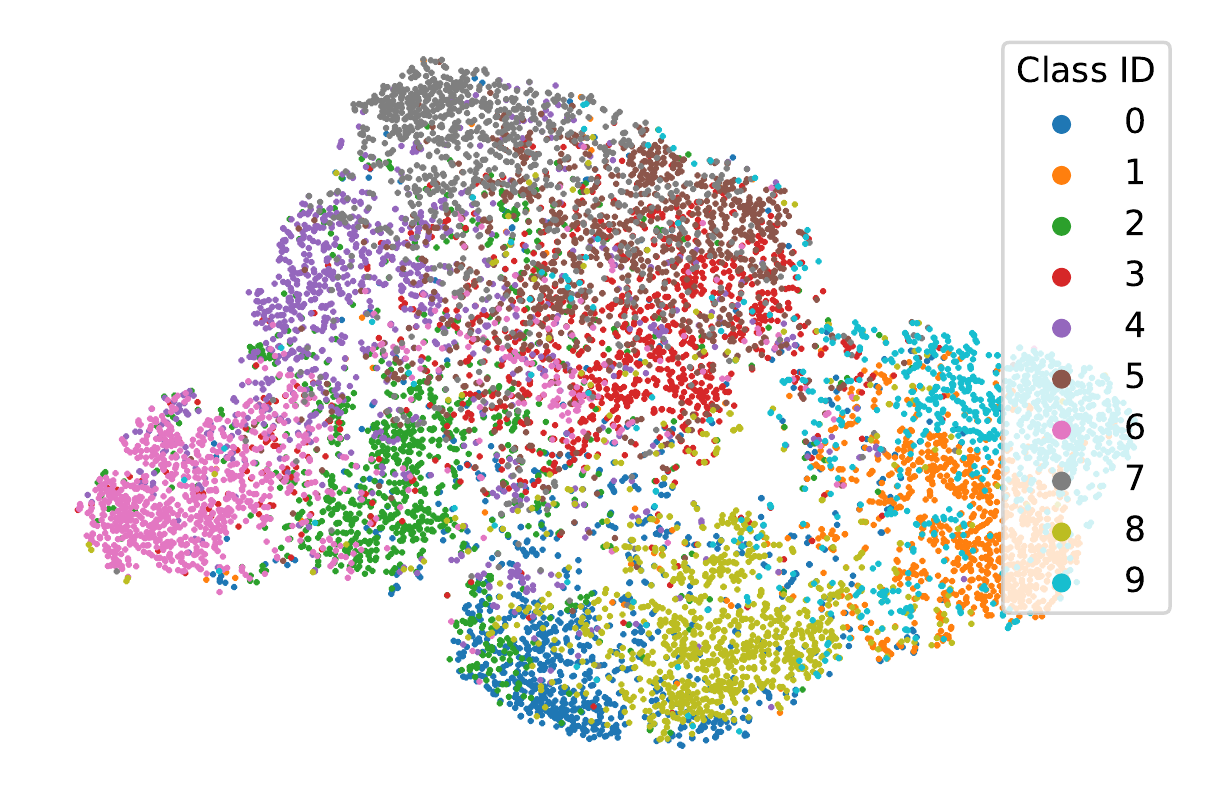}}\ \ \
\subfloat[Global Model of FedProx]{\label{fig:cifar10-tsne_prox_master}
\includegraphics[width=1.72in]{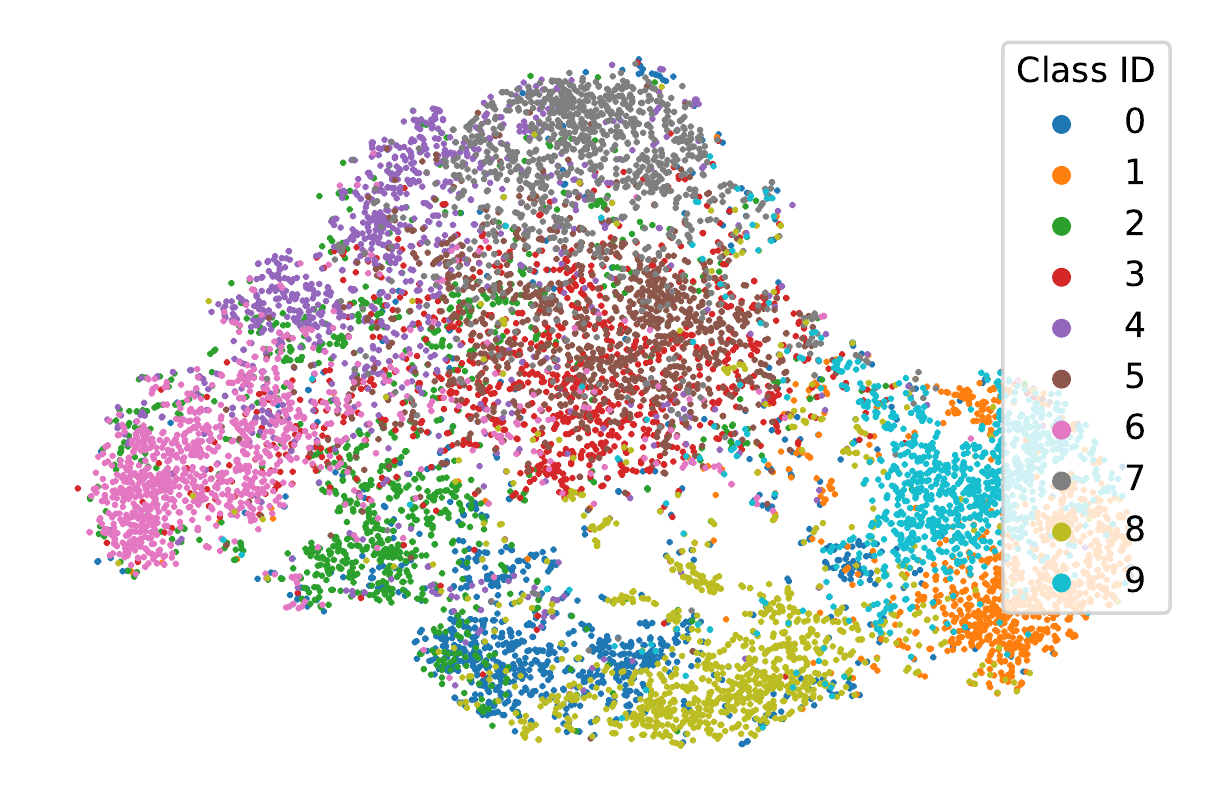}}\ \ \
\subfloat[Global Model of FedGKD]{\label{fig:cifar10-tsne_gkd_master}
\includegraphics[width=1.72in]{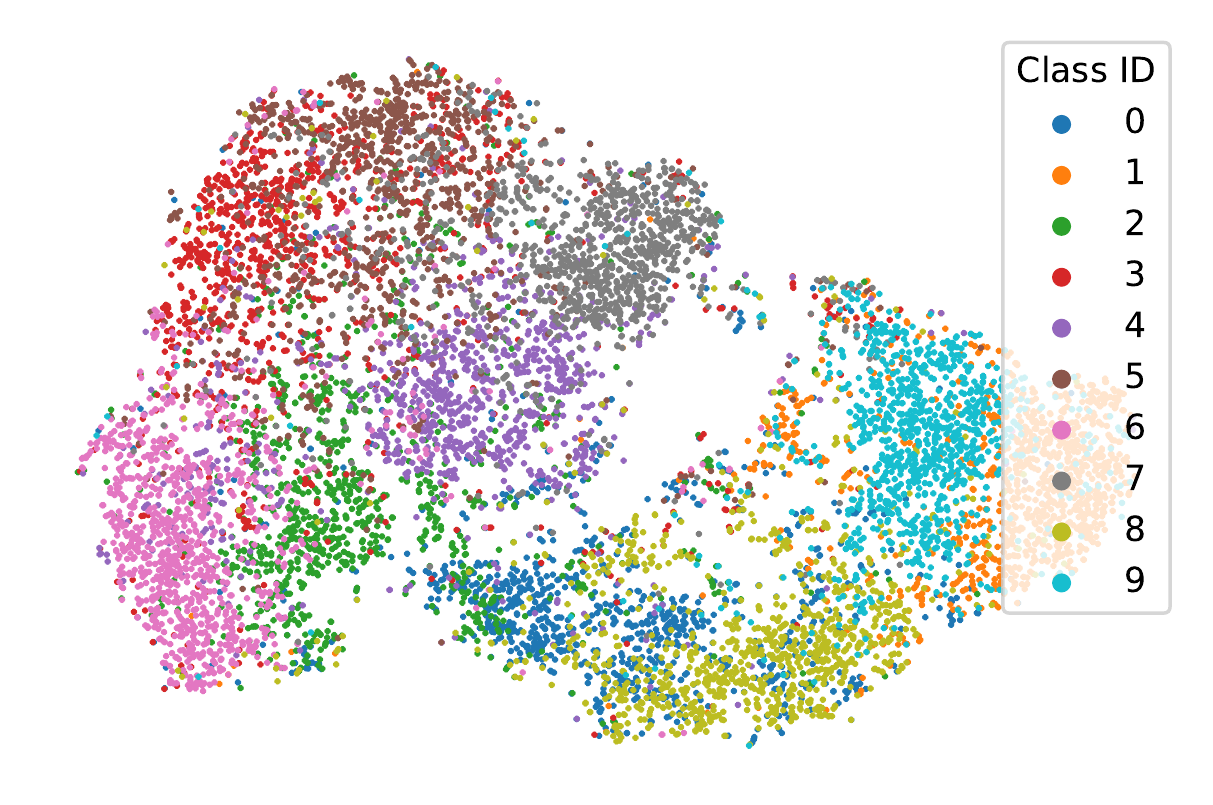}}\ \ \
\subfloat[Local Model of FedAvg]{\label{fig:cifar10-tsne_avg_client}
\includegraphics[width=1.72in]{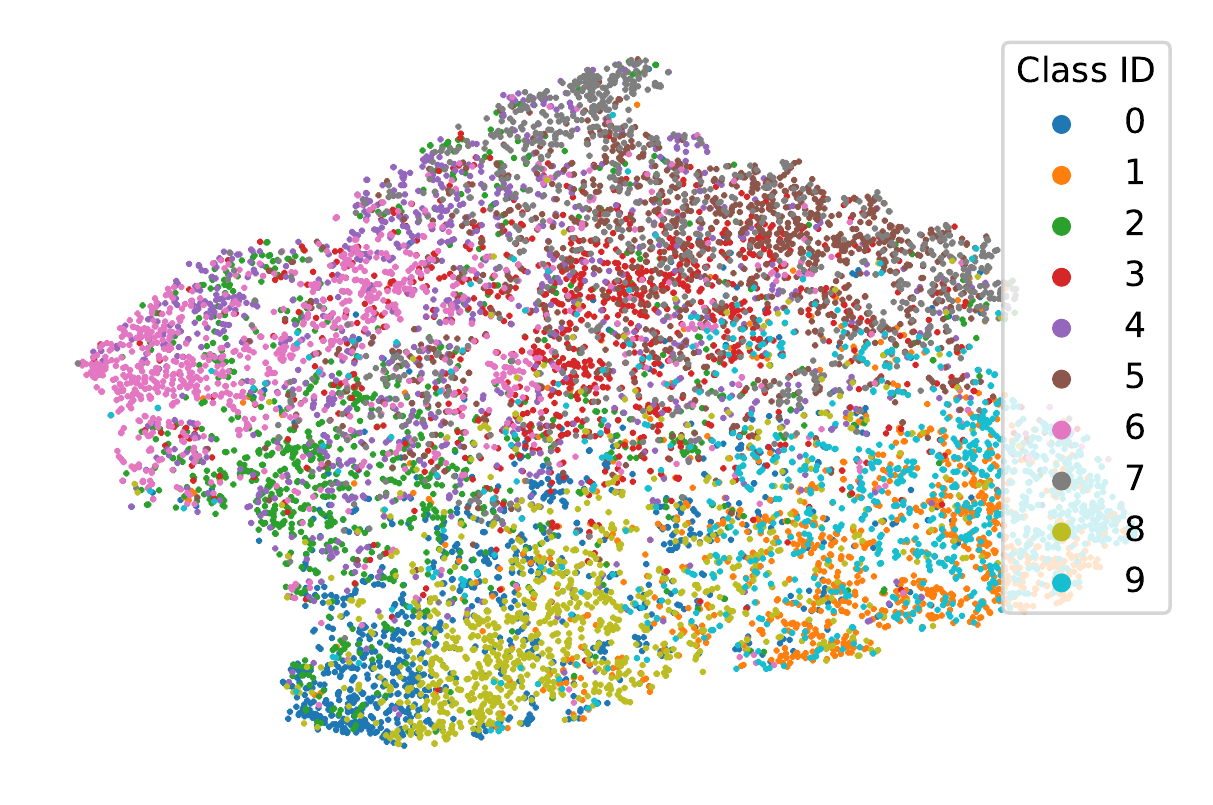}} \ \ \
\subfloat[Local Model of FedProx]{\label{fig:cifar10-tsne_prox_client}
\includegraphics[width=1.72in]{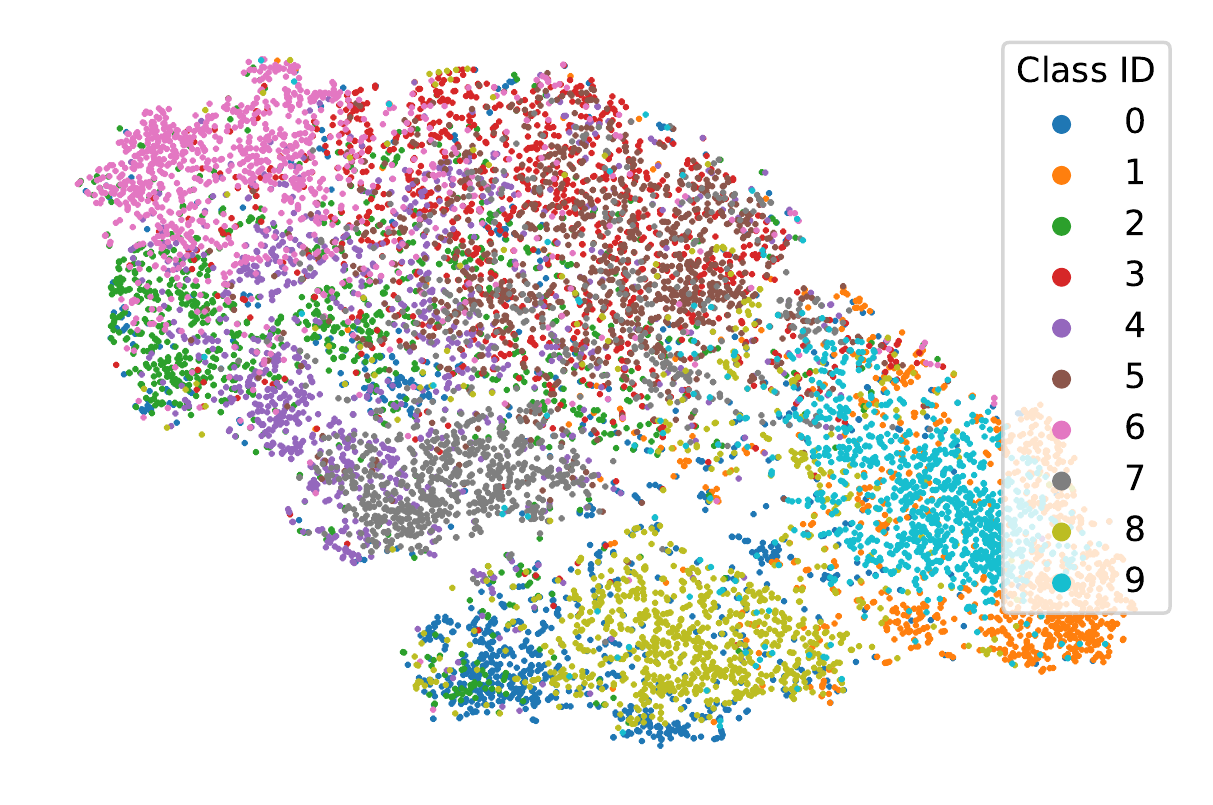}} \ \ \
\subfloat[Local Model of FedGKD]{\label{fig:cifar10-tsne_gkd_client}
\includegraphics[width=1.72in]{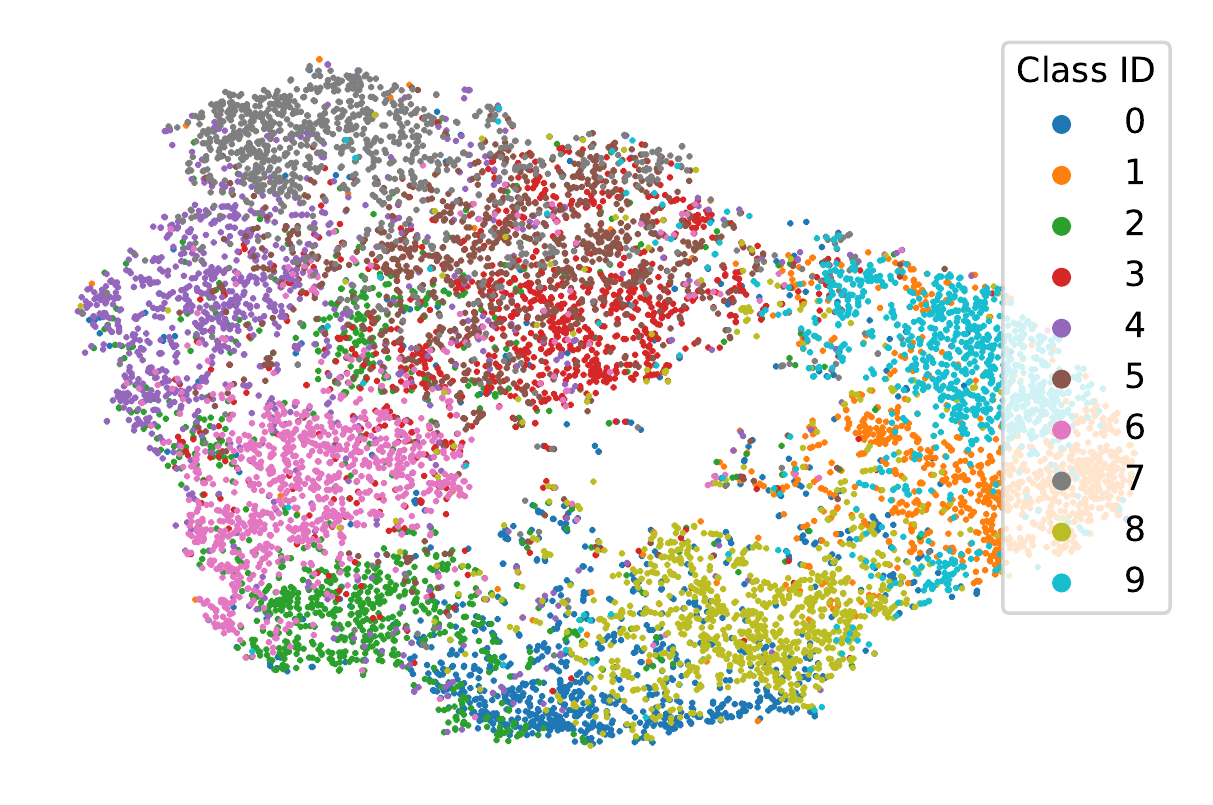}} \ \ \
\caption{T-SNE visualizations of feature of model on CIFAR-10 test dataset (Model of 100 communication rounds under $\alpha$=0.1 setting). \textbf{The global model accuracy of FedAvg, FedProx and \system is 39.99\%, 64.08\% and 65.98\%, test accuracy of the local model is 16.02\%, 32.11\% and 44.73\%, respectively.}} \label{fig:cifar10-tsne}
\end{figure*}

\section{Experiments}

\subsection{Experimental Setup}
\begin{figure*}[htb]
\centering
\subfloat[$\alpha$=1]{
\begin{minipage}[t]{0.32\linewidth}
\centering
\includegraphics[width=1.8in]{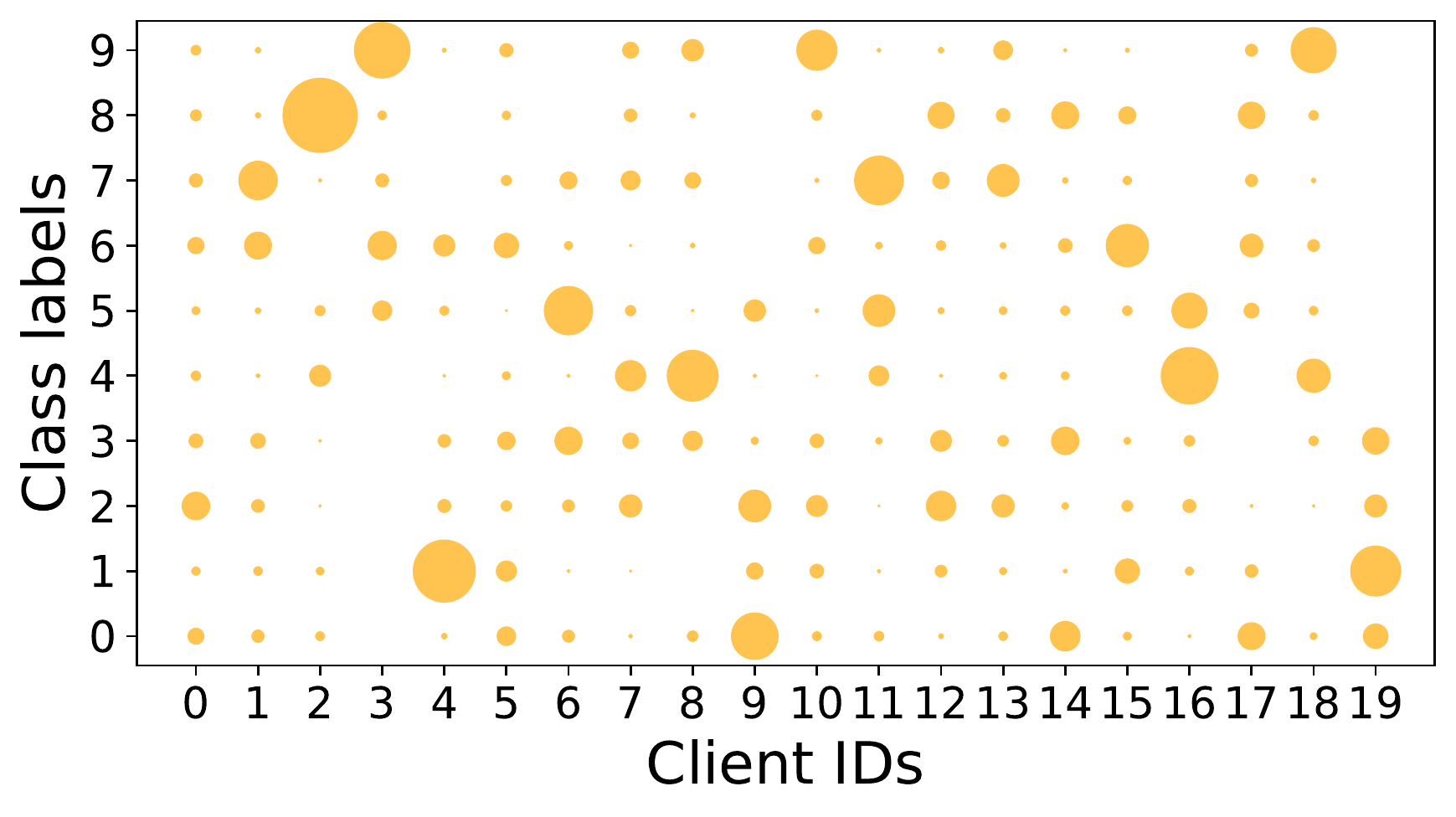}  \\
\includegraphics[width=1.8in]{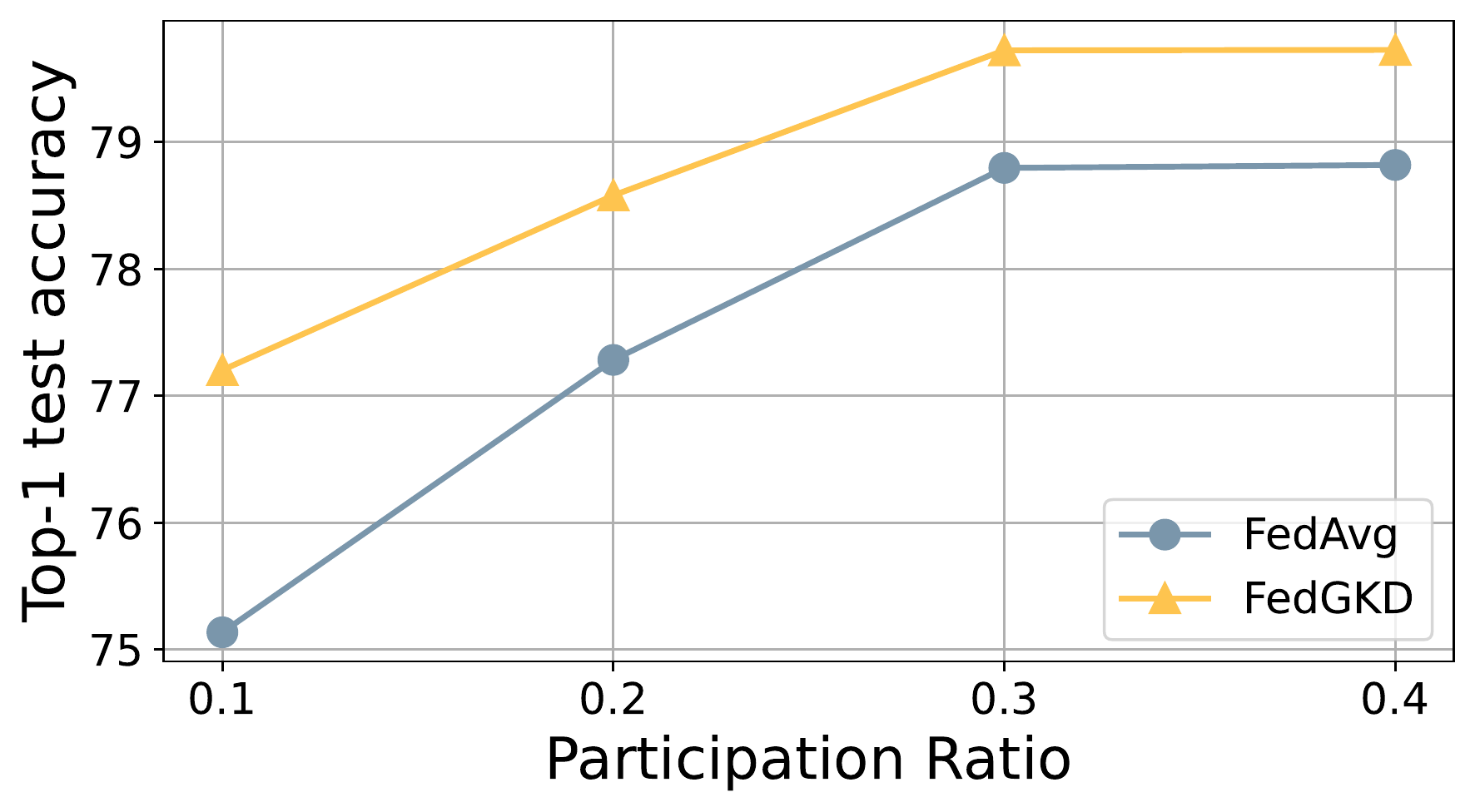}
\end{minipage}%
}
\subfloat[$\alpha$=0.5]{
\begin{minipage}[t]{0.32\linewidth}
\centering
\includegraphics[width=1.8in]{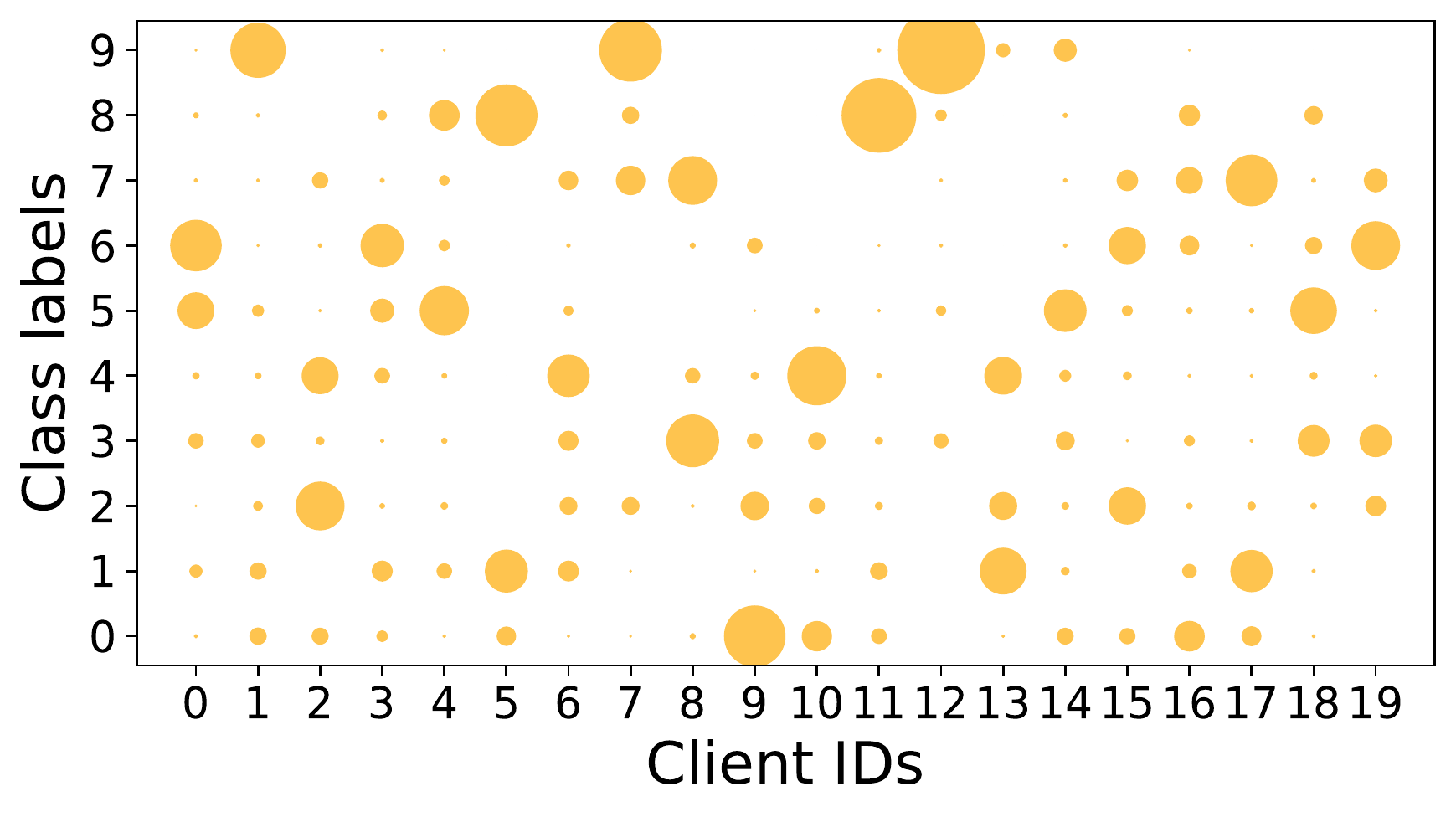}\\
\includegraphics[width=1.8in]{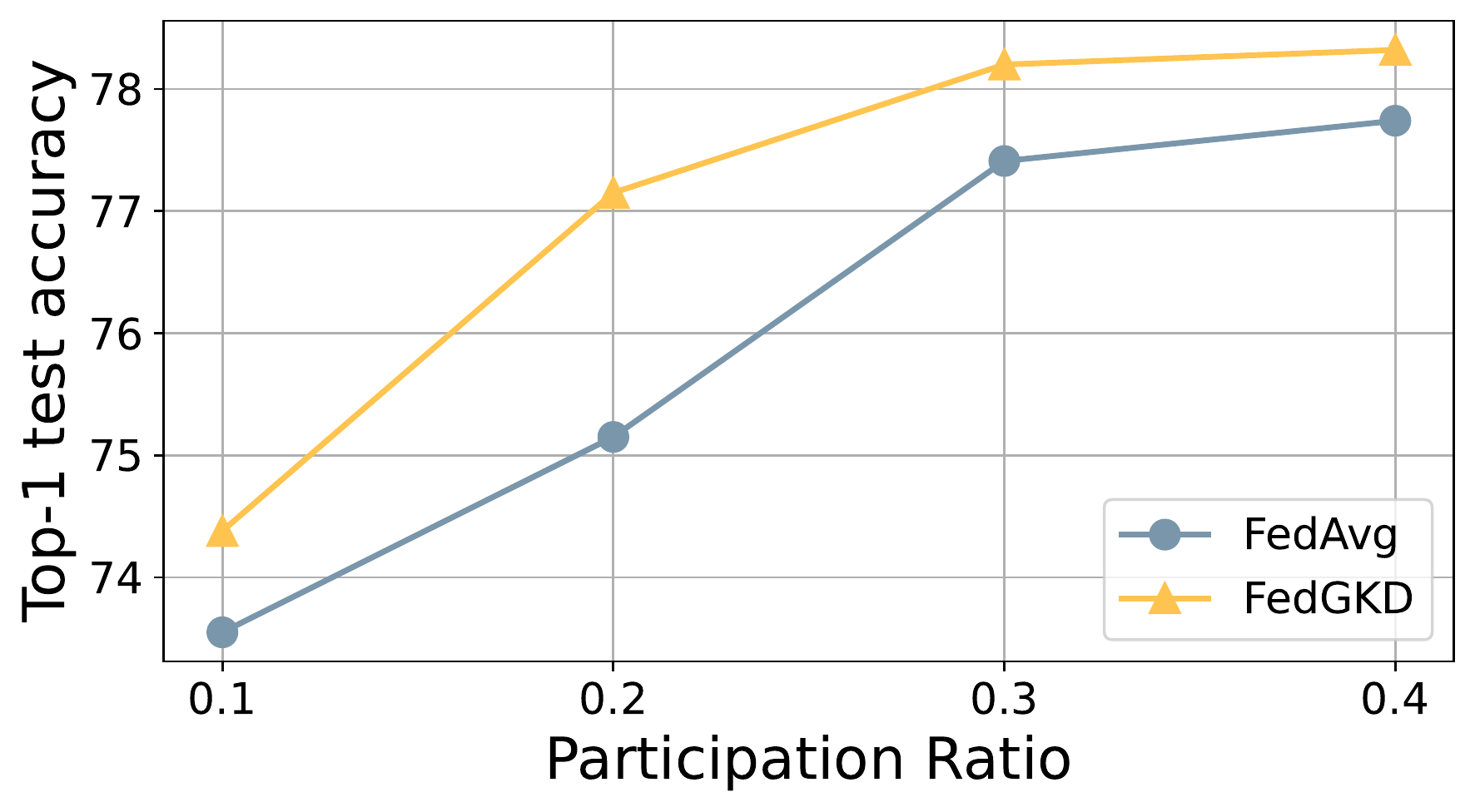}
\end{minipage}%
}
\subfloat[$\alpha$=0.1]{
\begin{minipage}[t]{0.32\linewidth}
\centering
\includegraphics[width=1.8in]{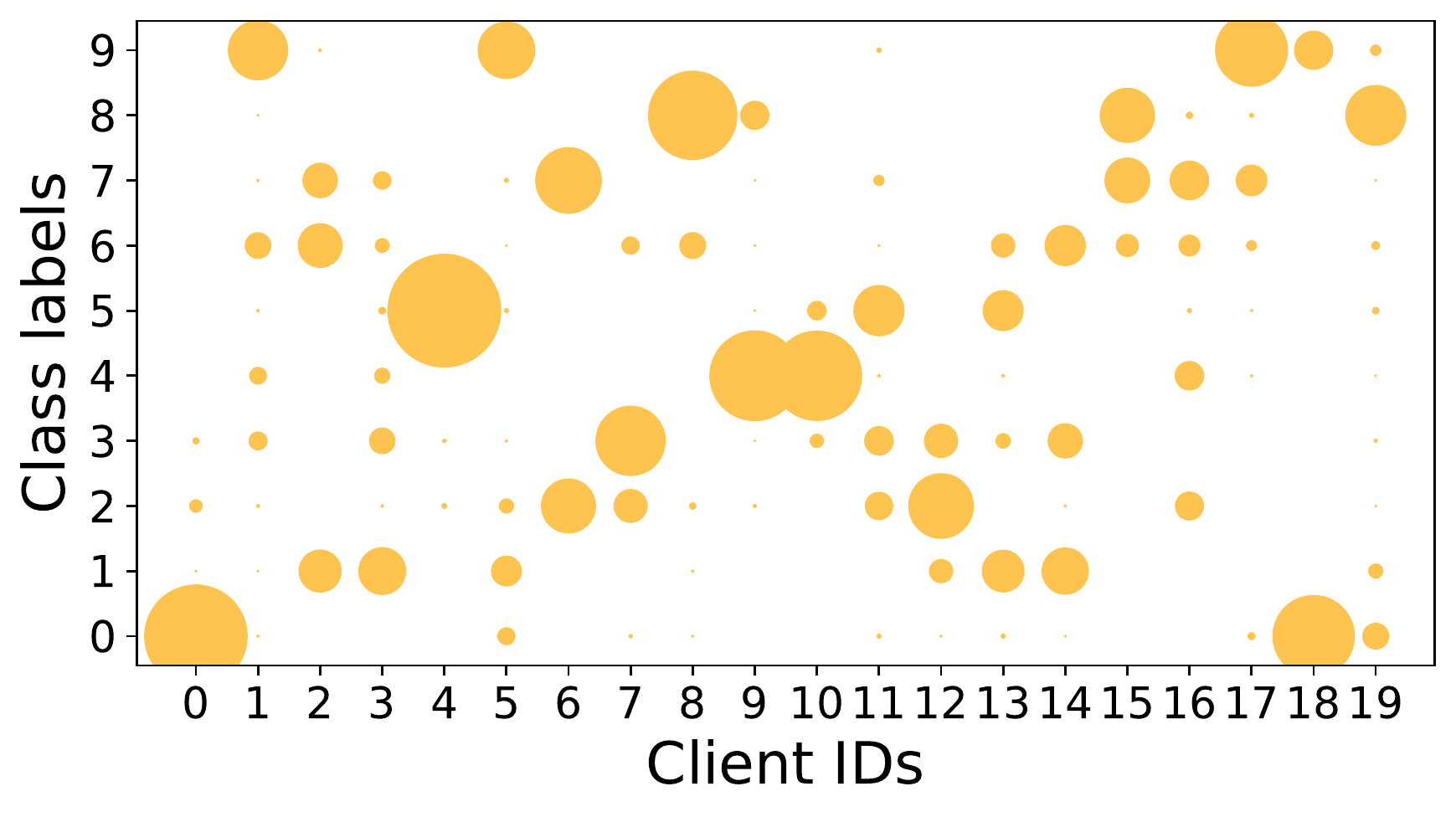}\\
\includegraphics[width=1.8in]{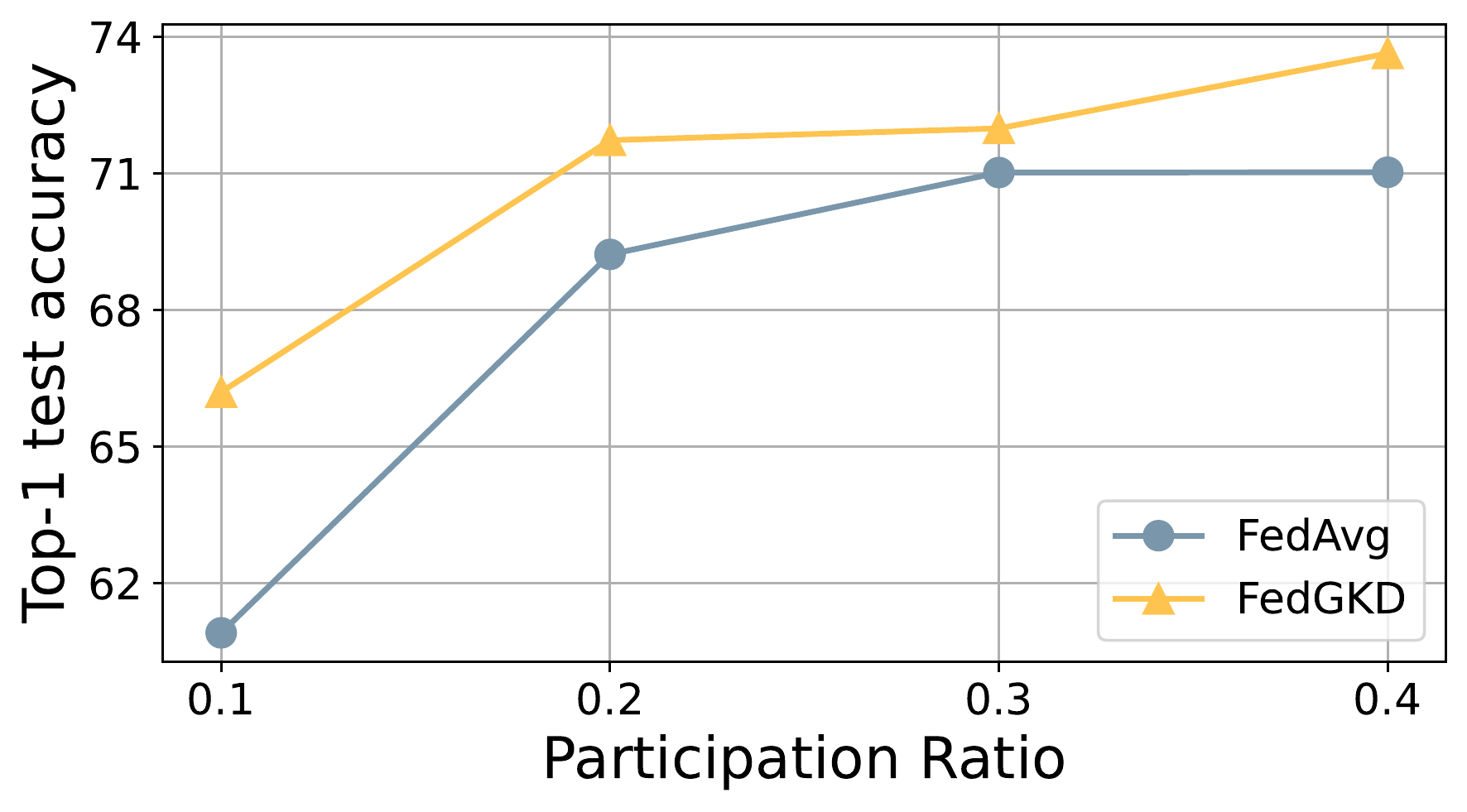}
\end{minipage}%
} \ \ \

\caption{\textbf{Top: Visualization of the non-IID ness} on CIFAR-10 dataset. \textbf{Bottom: Performance of FedAvg and \system(M=1)} with ResNet-8 on CIFAR-10 under different $\alpha$ and participation ratio setting.} \label{fig:num_class}
\end{figure*}
\paragraph{Datasets}

We compare with previous FL methods on both CV and NLP tasks and use architectures of ResNet~\citep{cvpr/HeZRS16} and DistillBERT~\citep{sanh2019distilbert} respectively. For CIFAR-10 and CIFAR-100~\citep{krizhevsky2009learning}, we conduct image classification tasks using ResNet-8. For Tiny-ImageNet dataset, we train the ResNet-50. For all these CV tasks, we use 90\% of the data as the training dataset. For NLP tasks, we perform text classification tasks on a 5-class sentiment classification dataset SST-5 and a 4-class news classification dataset AG News. We use 50\% and 100\% of the data respectively as the  training datasets. The datasets correspond to each task are summarized in Tab.~\ref{tbl:table-data}.




\begin{table}[!ht]
\centering
 \caption{Statistics of training datasets.}
\small
\setlength{\tabcolsep}{4pt} 
\begin{tabular}{llcccc}
\hline
  Dataset & Task & Train Data Size & Class & clients\\
  \hline
  CIFAR-10& image& 45,000 & 10 & 20\\
  CIFAR-100& image& 45,000& 100 & 20\\
  Tiny-Imagenet & image& 90,000 & 200 & 20\\
  \hline
  SST-5& text& 8,544 & 5 & 10 \\
  AG News& text& 60,000& 4 & 20\\
    \bottomrule
  \end{tabular}
 \label{tbl:table-data}
\end{table}
Following prior work~\citep{lin2020ensemble,hsu2019measuring}, we use the Dirichlet distribution $\bf{Dir}(\alpha)$ to sample disjoint non-IID client training data, where $\alpha$ denotes the concentration parameter, and the smaller $\alpha$ indicates higher data heterogeneity. Fig.~\ref{fig:num_class} visualizes the sampling results on CIFAR-10 in our experiments. 


\paragraph{Implementation Details}
The proposed \system and baselines are all implemented in PyTorch and evaluated on a Linux server with four V100 GPUs, MPI is used as communication backend to support distributed computing.

Considering Batch Normalization(BN) fails on heterogeneous training data~\citep{hsieh2020non} due to the statistics of running mean and variance for the local data, we replace BN by Group Normalization(GN) to produce stabler results and alleviate some of the quality loss brought by BN. It is worth noting that on tiny-imagenet, if GN is used in ResNet-50 and a 2-layer MLP is added, the model will not converge. To compare with MOON~\citep{li2021model}, we use the BN on ResNet-50 in MOON and \system$^{+}$.

For all the CV tasks, we set the channels of each group as 16. The optimizer of client training used is SGD, we tune the learning rate in $\{0.1,0.05,0.01\}$ for  FedAvg and set the learning rate as 0.05, weight decay is set $1e\text{-}5$, momentum is 0.9. 
We run 100 global communication rounds on CIFAR-10/100 datasets, and 30 communication rounds on Tiny-Imagenet dataset, the local training epochs is set as 20, and the default participant ratio is set as C=0.2. 

For all the NLP fine-tuning tasks, Adam is used as local optimizer, learning rate is set $1e\text{-}5$, weight decay is set 0.
Considering the fine-tuning process could be done in a few epochs, the communication round is set 10, local epoch is set 1 on AG News, 3 on SST-5. The clients participated each round is set 4.

The training batch size on all the tasks is set B=64.We run all the methods three trials and report the mean and standard derivation.

\paragraph{Parameter Setting}

As stated in MOON~\citep{li2021model}, the projection head is necessary for the method, for fair comparison, we add the experiments with projection head.The projection layer setting is same as SimCLR~\citep{chen2020simple}, 2-layer MLP with output dimension 256.To compare with MOON, we set the coefficient $\mu$=5,5,1,0.1 on CIFAR-10,CIFAR-100, Tiny-imagenet and NLP datasets.As for the temperature, we set it 0.5 as reported in ~\citep{li2021model}.

To compare with FedProx, we set $\mu$=0.01,0.001,0.001,0.001 on CIFAR-10,CIFAR-100, Tiny-imagenet and NLP datasets.

The distillation temperature of FedDistill$^{+}$, and for the distillation coefficient, we set it as 0.1 for all the datasets.

To compare with FedGen, we follow the global generator training setting on server as ~\citep{zhu2021data}, and tune the local regularization coefficient from $\{0.1, 1,10\}$. The hidden dimension of 2-layer MLP generator is set 512,2048,768 for ResNet-8, ResNet-50, DistilBERT.

For \system, we set $\gamma=0.2$ for ResNet-8 and DistilBERT, $\gamma=0.1$ for ResNet-50. For \systemled, the $\gamma_1\ldots\gamma_M$ are determined by validation performance.

For \systemled, we set the default buffer size as default 5 on CV tasks, 3 on NLP tasks, the parameters $\gamma_1\ldots\gamma_M$ are set by validation performance follows the calculation: $\frac{\gamma_i}{2}=\lambda \frac{e^{-\frac{L_i}{\beta}}}{\sum_i^M{e^{-\frac{L_i}{\beta}}}}$, where $L_i$ is the validation loss of the $i$th global model in the buffer, $\beta$ denotes the temperature and $\lambda$ is the coefficient to control the scale, we set $\beta$ as $\frac{1}{M}$ for simplicity and $\lambda$ is set 0.1.  


\paragraph{Evaluation Methods}
We compare our proposed \system with the following state-of-the-art approaches.
\begin{itemize}
  \item \textbf{FedAvg}: FedAvg~\citep{mcmahan2017communication} is a classic federated learning algorithm, which directly uses averaging as aggregation method.
  \item \textbf{FedProx}: FedProx~\citep{li2018federated} improves the local objective based on FedAvg. It adds the regularization of proximal term for local training.
  \item \textbf{MOON}: MOON~\citep{li2021model} learns from contrastive learning~\citep{chen2020simple}, injecting the projection layer for the original model. The local previous model features are seen as negative and global model features are seen positive samples.
  \item \textbf{FedDistill$^{+}$}: FedDistill~\citep{seo2020federated} shares the logits of user-data and adds the distillation regularization term for clients. We further develop it with model parameters sharing and name it FedDistill$^{+}$.
  \item \textbf{FedGen}: FedGen~\citep{zhu2021data} shares the local label count information to train a global generator and regularizes the local training with this light-weighted generator.
  \item \textbf{\system}: The proposed method in Fig.~\ref{fig:framework}, sends the ensemble of global models to participants and updates local weights via optimizing Eq.~\ref{eq:obj_avg}.
  \item \textbf{\systemled}: The server sends $M$ models to participants and updates local weights via optimizing Eq.~\ref{eq:obj_his}.
  \item \textbf{\systemplus}: To compare with MOON, we add the projection layer for \system and develop it as \systemplus.
\end{itemize}

\begin{table*}[!htb]
\renewcommand\arraystretch{1.1}
\centering
 \caption{Top-1 Test Accuracy overview given different data settings for ResNet on CV Datasets(20 clients with C=0.2, 100 communication rounds, buffer length for \systemled is set 5). \textbf{Bold} number denotes the best result and \underline{underline} denotes the second best result.}
\resizebox{1.\linewidth}{!}{
\begin{tabular}{llcccccccc}
\hline
\multicolumn{1}{c}{Dataset} & \multicolumn{1}{c}{Setting} & FedAvg & FedProx & MOON  & FedDistill$^{+}$ & FedGen &  \system & \systemled & \systemplus \\ 
  
  \hline
  \multirow{3}{*}{CIFAR-10} &  $\alpha$=1 &77.30$\pm$0.72 & 76.39$\pm$0.59 & 78.17$\pm$0.81 & 77.79$\pm$0.51 &76.44$\pm$0.94 & \underline{79.14$\pm$0.72} & 78.88$\pm$0.39 & \bf{79.45$\pm$0.37} \\
  &$\alpha$=0.5 & 75.15$\pm$0.44 & 75.02$\pm$0.74 & 77.24$\pm$0.90 & 75.85$\pm$0.63 & 73.62$\pm$0.20 &77.15$\pm$0.28 & \bf{77.44$\pm$0.60} & \underline{77.38$\pm$0.60}\\
  &$\alpha$=0.1 &69.22$\pm$0.91 & 68.79$\pm$1.10 & 71.12$\pm$0.69 & 67.23$\pm$0.33 & 69.24$\pm$0.32 & 72.27$\pm$0.84 & \underline{72.99$\pm$0.49} & \bf{73.06$\pm$0.32} \\
  \hline
  \multirow{3}{*}{CIFAR-100} & $\alpha$=1 & 42.22$\pm$0.41 & 41.67$\pm$0.56 & 40.76$\pm$0.04 & 42.38$\pm$0.24 & 39.83$\pm$0.58 & \bf{43.82$\pm$1.39} & \underline{42.91$\pm$0.23} & 40.98$\pm$0.41 \\
  &$\alpha$=0.5 & 40.42$\pm$0.58 & 40.08$\pm$0.24 & 35.89 $\pm$1.12 & 40.97$\pm$0.12 & 39.30$\pm$0.64 & \bf{42.44$\pm$0.32}& \underline{42.34$\pm$0.31} & 37.91$\pm$0.63 \\
  &$\alpha$=0.1 & 34.91$\pm$1.00 & 35.80$\pm$0.15 & 31.90$\pm$0.51 & 33.94 $\pm$0.45 & 33.13$\pm$0.43 & \underline{36.83$\pm$0.20}&\bf{37.29$\pm$0.07} & 33.42$\pm$0.17  \\
  \hline
  \multirow{3}{*}{Tiny-ImageNet}
  &$\alpha$=1 &34.81$\pm$1.57 & 36.54$\pm$1.67 & \underline{43.20$\pm$0.66} & 35.99$\pm$1.18 & 34.79$\pm$1.15 & 37.50$\pm$0.43 &35.79$\pm$1.04 & \bf{43.78$\pm$1.18}\\
  &$\alpha$=0.5 & 32.48$\pm$1.59 & 35.41$\pm$0.64 & \underline{40.16$\pm$1.96} & 34.02$\pm$0.73 & 34.47$\pm$0.63 & 37.28$\pm$0.86 & 37.35$\pm$0.65 & \bf{40.56$\pm$2.34} \\
  &$\alpha$=0.1 & 29.02$\pm$0.65 & 31.93$\pm$0.85 & 33.56$\pm$0.16 & 28.95$\pm$0.10 & 32.84$\pm$0.99  & \underline{34.64$\pm$0.21} &33.71$\pm$1.74 & \bf{35.72$\pm$0.71} \\
  \bottomrule

  \end{tabular}
  }
 \label{tbl:table_compare}
\end{table*}

\begin{table*}[!htb]
\renewcommand\arraystretch{1.1}
\centering
 \caption{Performance overview given different data settings for DistilBERT (4 participants each round, 10 communication rounds, buffer length for \systemled is set 3).}
\resizebox{1.\linewidth}{!}{
\begin{tabular}{llcccccccc}
\hline
\multicolumn{1}{c}{Dataset} & \multicolumn{1}{c}{Setting} & FedAvg & FedProx & MOON  & FedDistill$^{+}$ & FedGen & \system & \systemled & \systemplus \\ 
  
  \hline
  AG News &  $\alpha$=0.1 & 89.19$\pm$0.87 & 89.19$\pm$0.87 & 89.25$\pm$0.48 & 89.35$\pm$0.85 &89.28$\pm$0.77 & \bf{89.60$\pm$0.44}  &89.04$\pm$0.22 & \underline{89.48$\pm$0.18} \\
  \hline
  SST-5 &  $\alpha$=0.1 & 40.39$\pm$0.50 & 40.42$\pm$0.49 & 41.48$\pm$2.24 & 41.25$\pm$1.03 & 41.00$\pm$3.23  & \bf{43.24$\pm$1.80} &\underline{41.93$\pm$2.05} & 41.21$\pm$1.20\\
  \bottomrule
  \end{tabular}
  }
 \label{tbl:table_nlp}
\end{table*}
\subsection{Experimental Results}
\paragraph{Results Overview}
In Fig.~\ref{fig:num_class}, we give a brief comparison for FedAvg and \system under various participation ratio setting, \system constantly outperforms FedAvg. 
We compare the  top-1 test accuracy of the different methods as reported in Tab.~\ref{tbl:table_compare} and Tab.~\ref{tbl:table_nlp}, the proposed method \system outperforms previous state-of-the-art methods 
Under the $\alpha=0.1$ (the hyper-parameter for Dirichlet distribution) setting, \system can outperform FedAvg by 5.6\% on Tiny-Imagenet, 1.9\% on CIFAR-100, 3.0\% on CIFAR-10, 0.4\% on AG News, and 2.8\% on SST-5. 
Moreover, we observe \system brings the smoothness for learning procedure compared with the FedAvg as shown in Fig.~\ref{fig:acc_plot}, and \system constantly outperforms FedAvg and FedProx in the training procedure. 
We present the advantage of \system to mitigate local drifts in Fig.~\ref{fig:cifar10-tsne}, the local model of FedAvg achieves only 16.02\% accuracy on CIFAR-10 test dataset, and the global model fails to classify the image features due to the local drifts, the global accuracy falls to 39.99\%. \system improves the test accuracy of the local model by 28.7\%, and over 25\% on global model test accuracy. FedProx improves the global accuracy to 64.08\% but the test accuracy of local model is 12.6\% lower than \system.

 \begin{table*}[!htb]
\renewcommand\arraystretch{1.1}
\centering
\caption{Comparison of different FL methods in different sampling ratio, we evaluate on CIFAR-10 with ResNet-8, $\alpha$=0.5.}
\resizebox{1.\linewidth}{!}{
\begin{tabular}{ccccccccc}
\hline
\multirow{2}{*}{Method}  & \multicolumn{2}{c}{C=0.1} &\multicolumn{2}{c}{C=0.2} & \multicolumn{2}{c}{C=0.3} &\multicolumn{2}{c}{C=0.4}  \\

\cline{2-9}
  & Best  & Final & Best & Final & Best  & Final & Best  & Final \\
  \hline
  FedAvg& 73.55$\pm$0.10& 73.31$\pm$0.35& 75.15$\pm$0.44 & 72.70$\pm$0.78 & 77.41$\pm$0.21& 74.18$\pm$0.21& 77.74$\pm$0.61& 74.45$\pm$0.52 \\
  \hline
  FedProx& 72.59$\pm$0.66& 72.59$\pm$0.66& 75.02$\pm$0.74& 72.08$\pm$0.92& 76.36$\pm$0.64& 74.55$\pm$0.48& 76.65$\pm$0.76& 74.03$\pm$0.61 \\
  \hline
  MOON& 73.47$\pm$0.47& 72.57$\pm$0.31& 77.24$\pm$0.90 & 74.48$\pm$1.01& 77.88$\pm$0.26& 75.00$\pm$0.33& 78.20$\pm$0.57& 75.05$\pm$0.44 \\
  \hline
  FedDistill$^{+}$& 73.75$\pm$0.35& 73.51$\pm$0.35& 75.85$\pm$0.63& 73.63$\pm$0.26& 77.31$\pm$0.65& 74.38$\pm$0.19& 78.22$\pm$0.45& 74.92$\pm$0.47\\
  \hline
  FedGen& 73.39$\pm$0.34& 73.39$\pm$0.34& 73.62$\pm$0.20& 72.27$\pm$0.42& 76.73$\pm$0.53& 73.75$\pm$0.35& 77.05$\pm$0.28& 73.96$\pm$0.58 \\
  \hline
  \system& 74.38$\pm$1.11& 74.10$\pm$0.86& 77.15$\pm$0.28& 74.27$\pm$0.33& \textbf{78.20$\pm$0.77}& \textbf{75.45$\pm$0.34}& \textbf{78.32$\pm$0.40}& \textbf{75.39$\pm$0.67} \\
  \systemled & \textbf{75.41$\pm$0.56} & 74.25$\pm$0.58 & \textbf{77.44$\pm$0.60} & 73.96$\pm$0.86&77.72$\pm$0.51 &75.74$\pm$0.31 & 78.08$\pm$0.58 & 75.75$\pm$0.75 \\
  \systemplus & 75.16$\pm$0.67& \textbf{74.46$\pm$0.54}& 77.38$\pm$0.60& \textbf{74.55$\pm$0.78}& 78.02$\pm$0.45& 74.54$\pm$0.42& 78.28$\pm$0.16& 75.24$\pm$0.33 \\
  \bottomrule
  \end{tabular}
  }
 \label{tbl:table1_complete}
\end{table*}

\begin{figure*}[!htb]
\centering
\subfloat[$\alpha$=0.1]{
\label{fig:learning_curve_alpha01}
\begin{minipage}[t]{0.31\linewidth}
\includegraphics[width=1.8in]{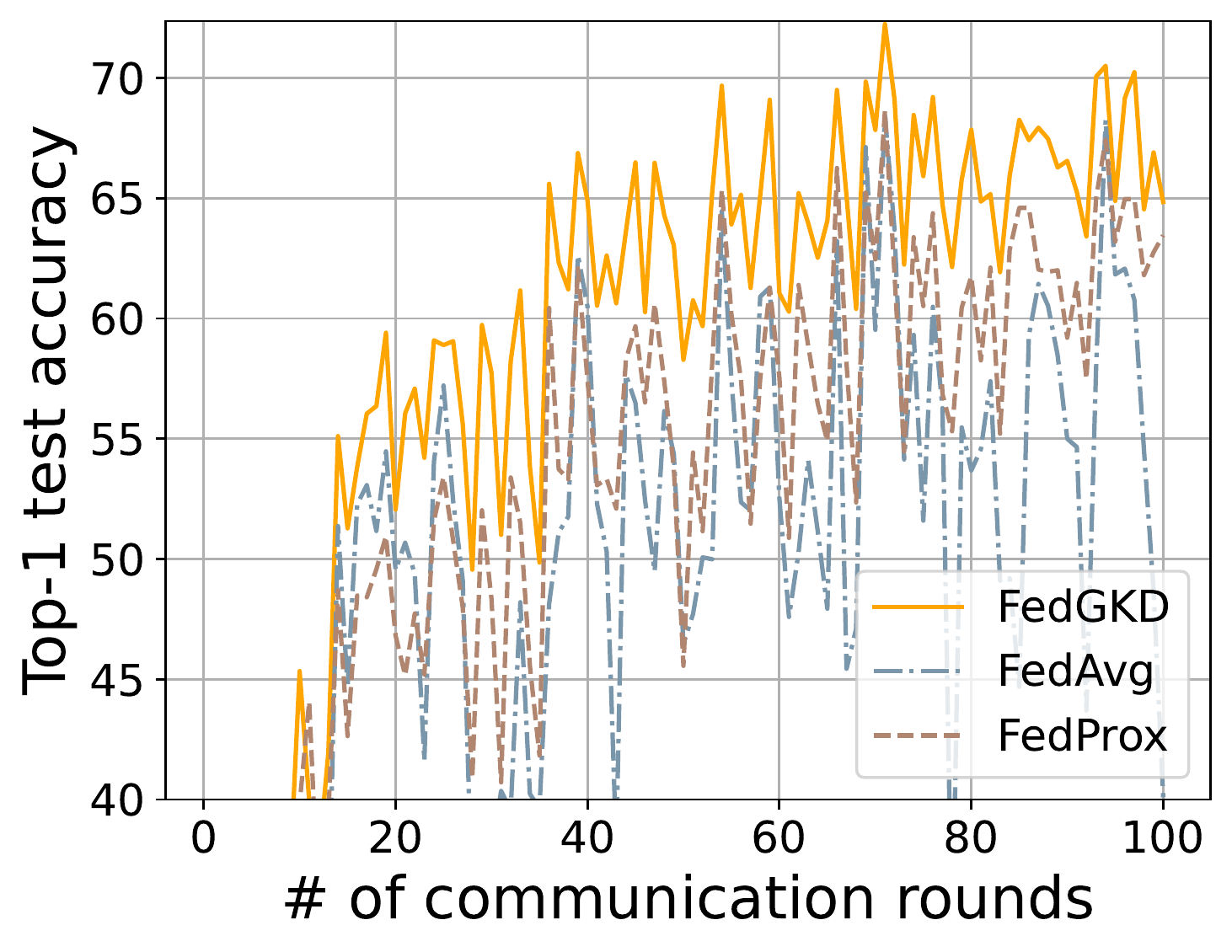}\ \ \
\includegraphics[width=1.8in]{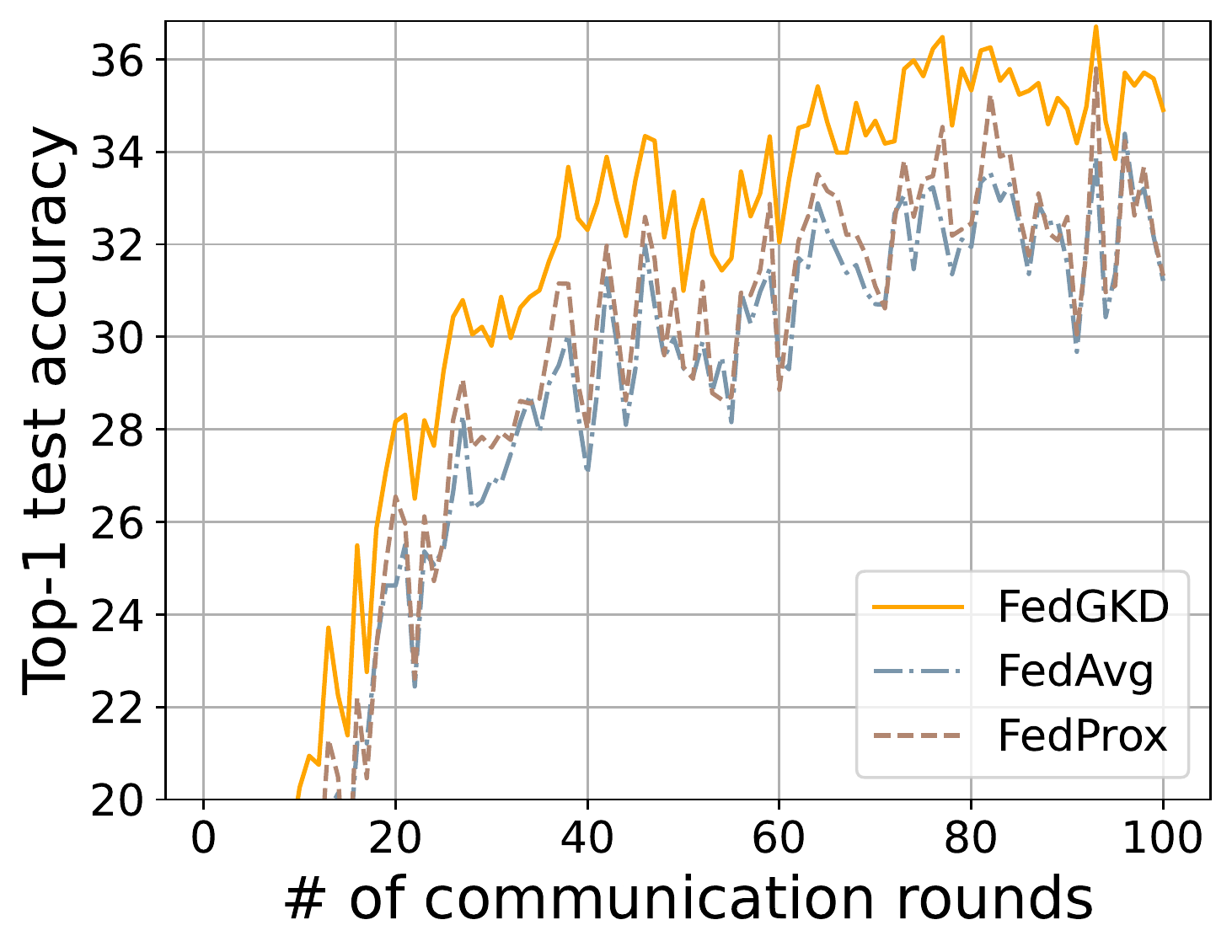}\ \ \
\includegraphics[width=1.8in]{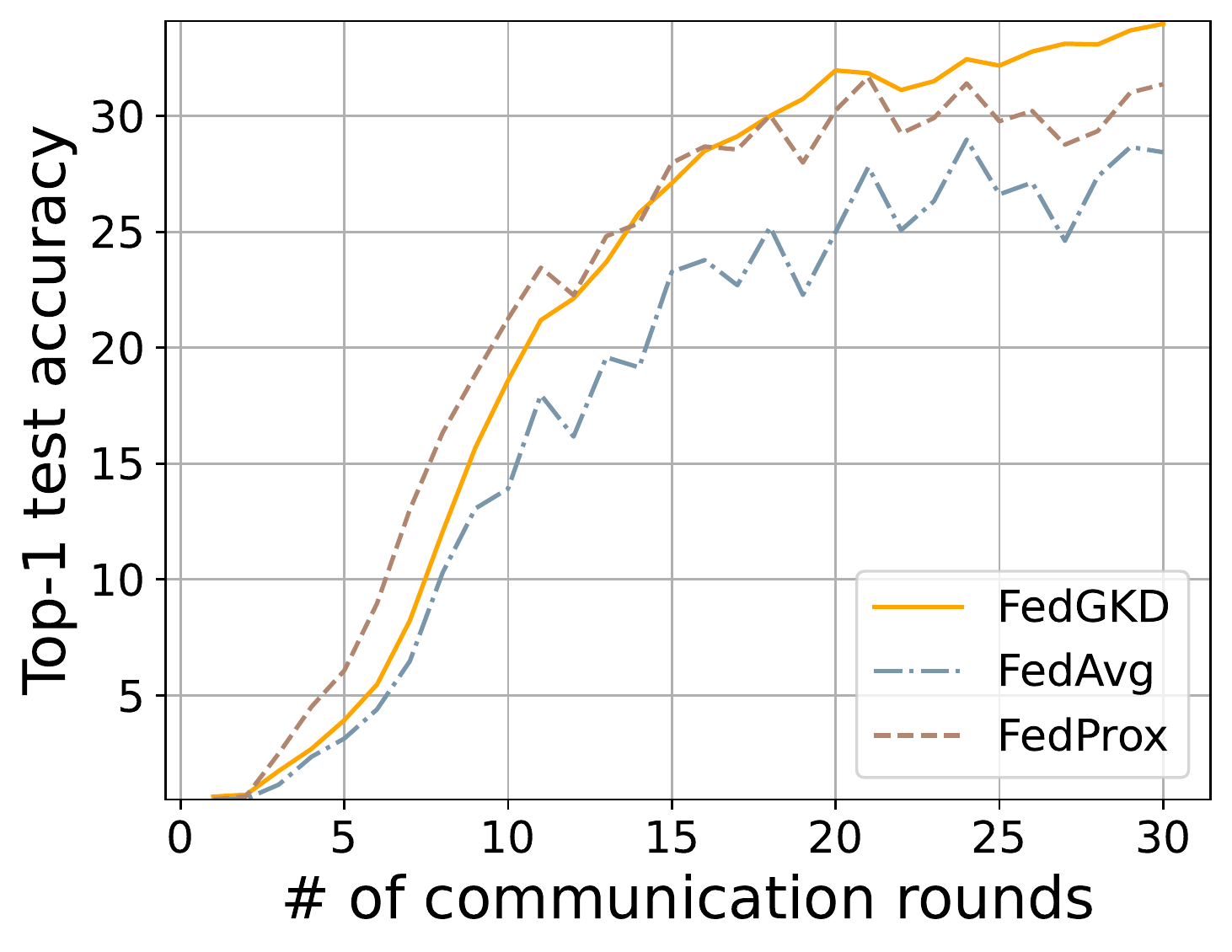}
\end{minipage}
}
\subfloat[$\alpha$=0.5]{
\label{fig:learning_curve_alpha05}
\begin{minipage}[t]{0.31\linewidth}
\includegraphics[width=1.8in]{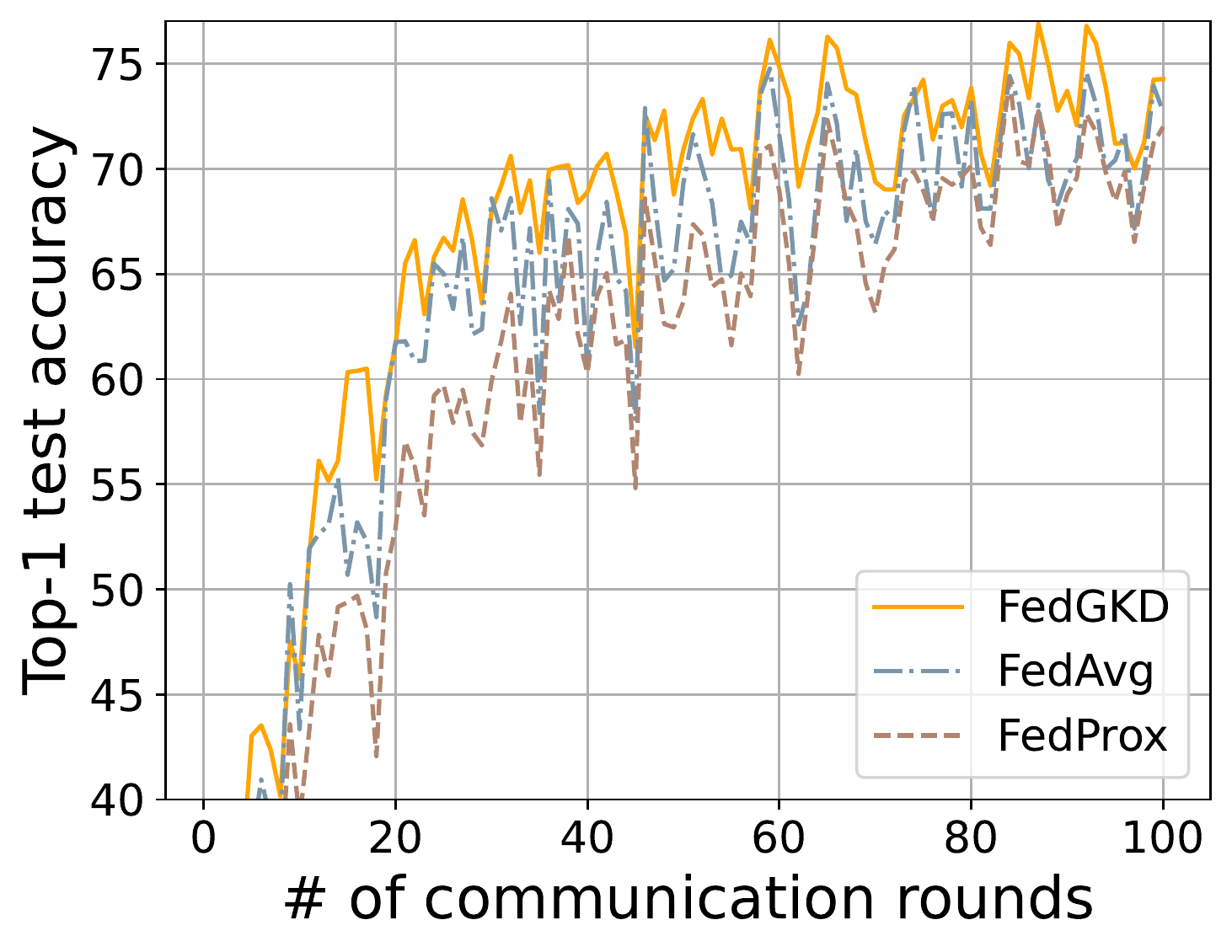}\ \ \
\includegraphics[width=1.8in]{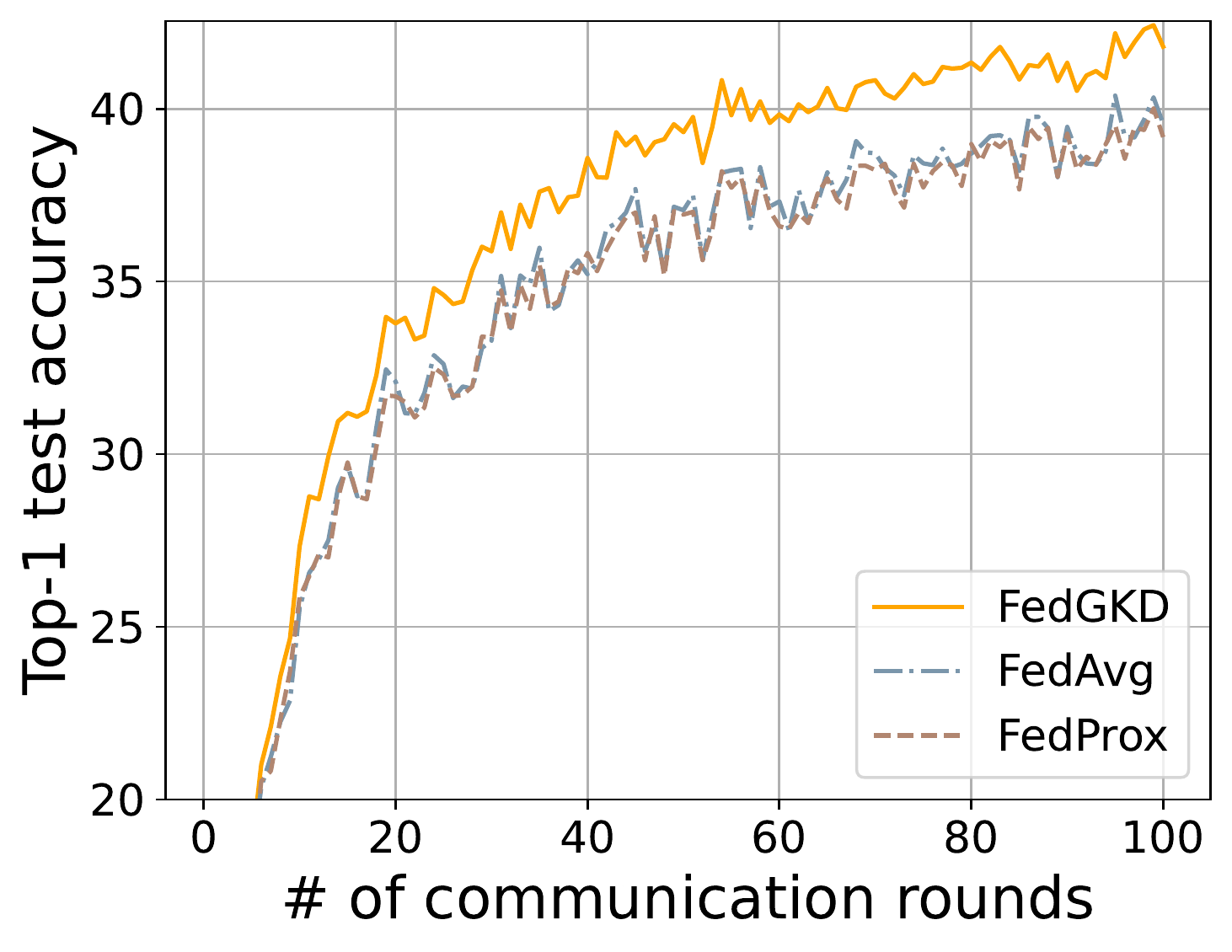}\ \ \
\includegraphics[width=1.8in]{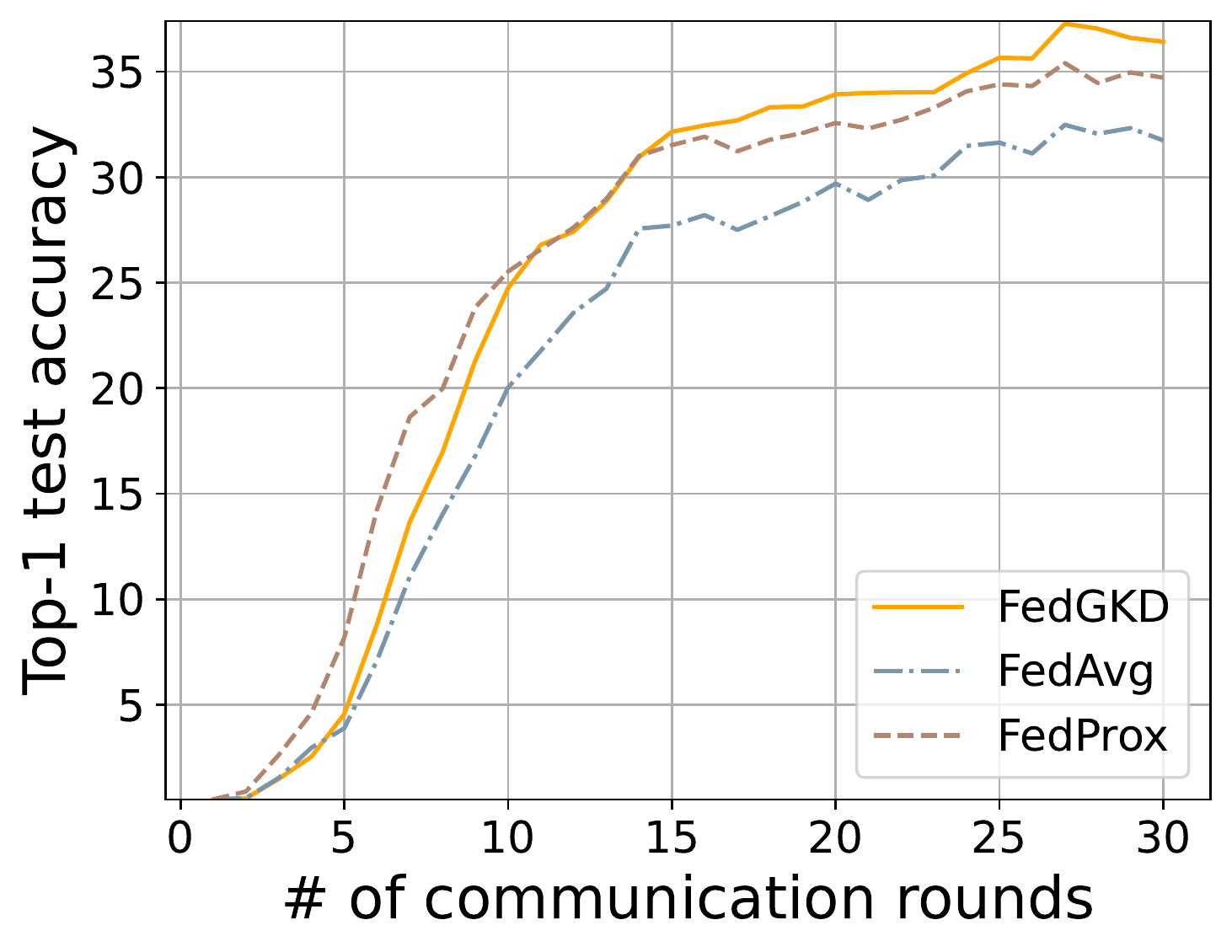}
\end{minipage}
}
\subfloat[$\alpha$=1]{
\label{fig:learning_curve_alpha1}
\begin{minipage}[t]{0.31\linewidth}
\centering
\includegraphics[width=1.8in]{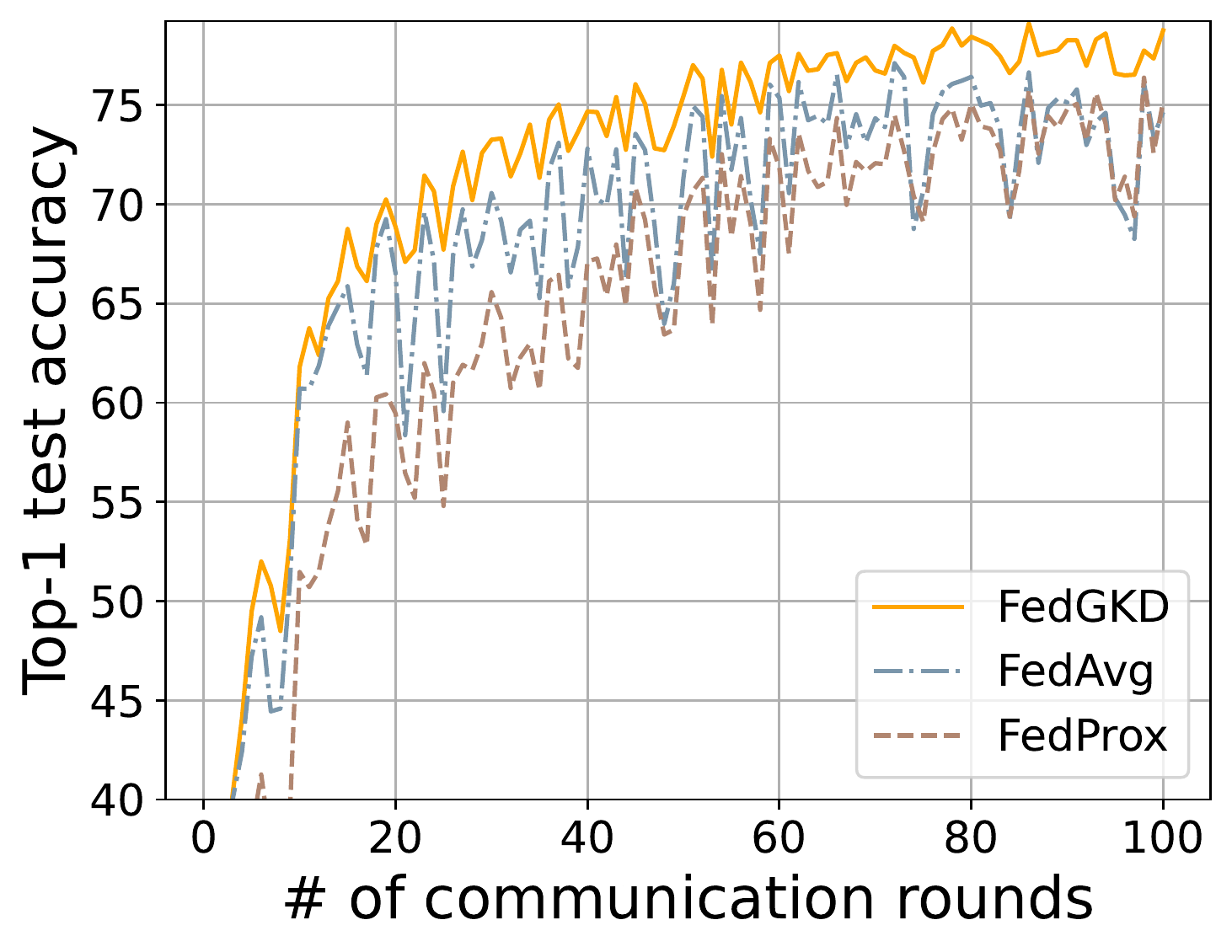}\ \ \
\includegraphics[width=1.8in]{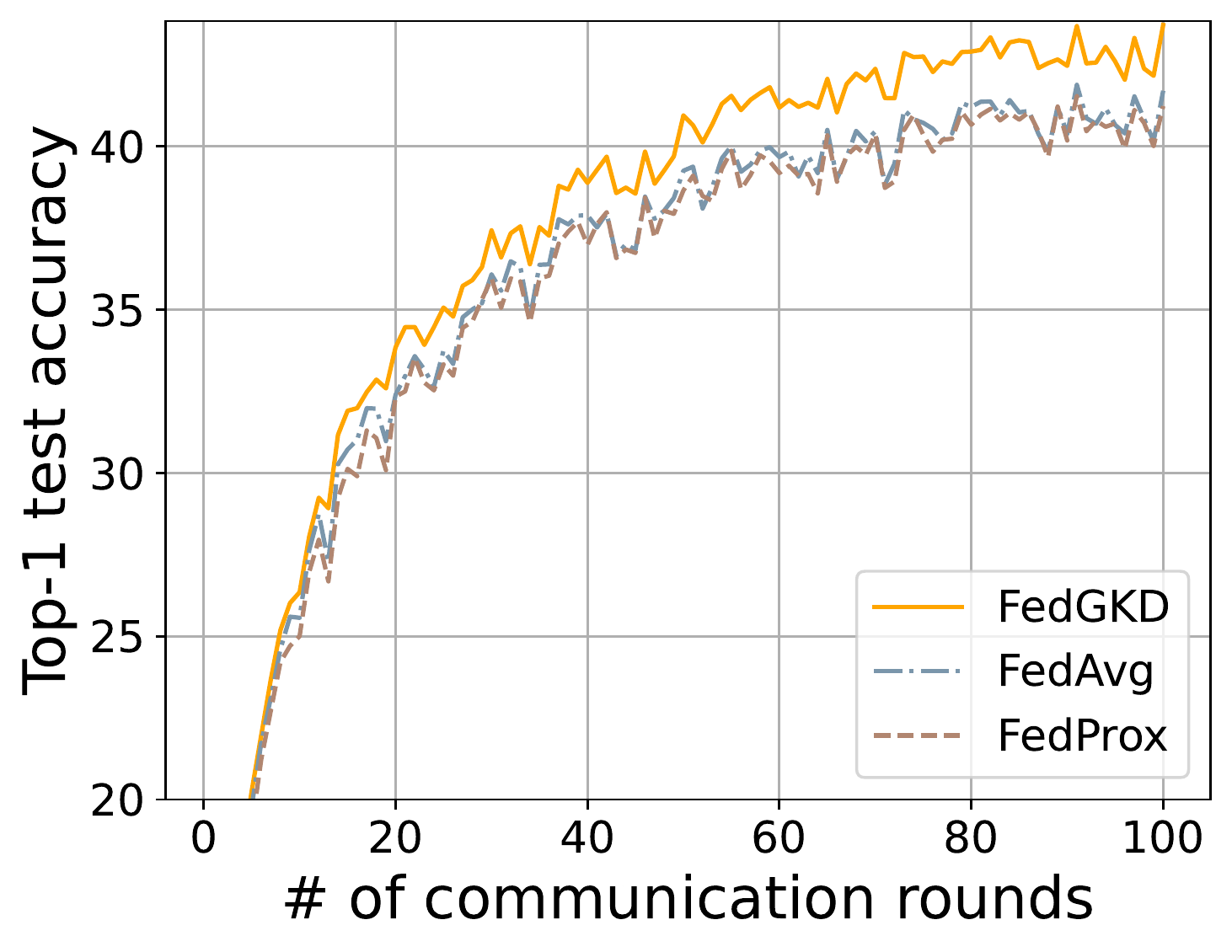} \ \
\includegraphics[width=1.8in]{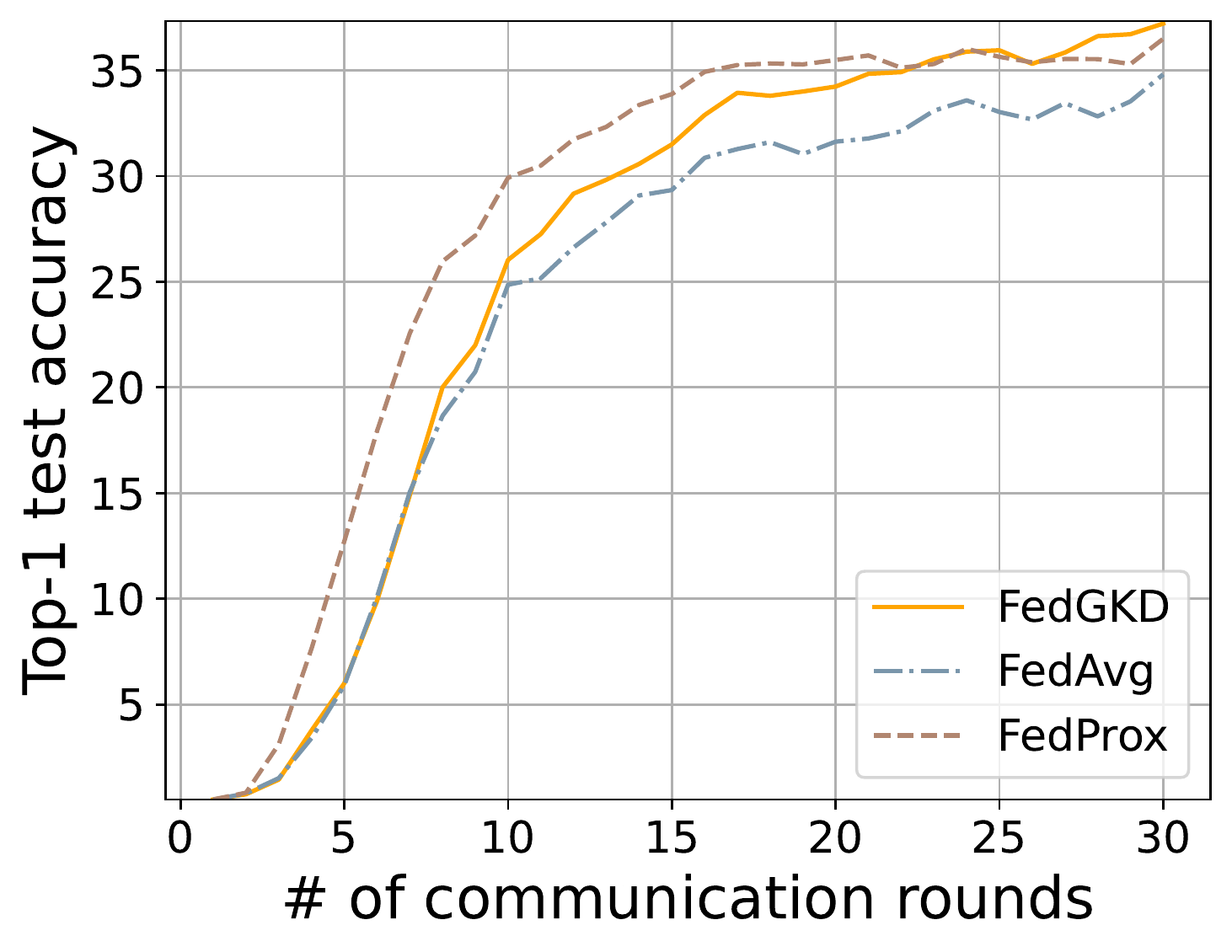}
\end{minipage}
}
\caption{Test accuracy of FedAvg, FedProx and \system on CV datasets in different number of communication rounds. \textbf{From top to bottom, the dataset is CIFAR-10, CIFAR-100, Tiny-ImageNet.} From left to right, the $\alpha$ ranges from 0.1 to 1.} \label{fig:acc_plot}
\end{figure*}


\paragraph{Impact of Participation Ratio}

As shown in Tab.~\ref{tbl:table1_complete}, we give a more comprehensive description to exhibit the advantage of our proposed method. With the participants number growing, our proposed methods constantly outperform the other methods, \system shows its greater superiority over other methods when there are fewer participants. For FedGen, it is not trivial to manage the generator training procedure, and it's even difficult to learn global features when the participation ratio is low. FedDistill$^{+}$ sends the local logits of each client directly to server, treating the averaged logits as global logits, while the quality of global logits also suffers from the low participation ratio. Compared with the model-modification methods, we find \systemplus consistently outperforms MOON.
\system and \systemled benefit from the historical global models ensemble and distillation to produce a more accurate result.

\paragraph{Impact of Data Heterogeneity}
As shown in Tab.~\ref{tbl:table_compare}, \system, \systemled and \systemplus outperform other methods and show their generality in different $\alpha$ setting. \system shows the superiority when the data heterogeneity increases. \systemplus benefits from the modification on model structure on CIFAR-10 and Tiny-Imagenet.
For FedProx, if the coefficient $\mu$ is set as a small number, the performance is close to FedAvg, otherwise it harms the convergence when the data heterogeneity decreases. While for \system, setting the regularization coefficient as 0.2 is helpful in most cases.
FedGen regularizes the local model mainly for the classifier layer based on the light-weighted generator, which limits its performance.

\paragraph{Robustness of the Methods}
As shown in Fig.~\ref{fig:acc_plot}, we observed the learning curve of \system constantly outperforms FedAvg and FedProx on CIFAR datasets. \system performs worse than FedProx on Tiny-Imagenet at the start and outperforms it when the communication rounds increases, and \system shows more smoothing learning curve compared with FedAvg.
We find that our proposed method shows robustness at different communication rounds as shown in Tab.~\ref{tbl:acc_straggler}. Most of the evaluation methods drops over 20\% accuracy on the last communication round compared to their best accuracy(shown in Tab.~\ref{tbl:table_compare}) except FedProx and \system.  While \system and \systemled show higher accuracy besides the stability compared to FedProx, as a detailed comparison provided in Fig.~\ref{fig:cifar10-tsne}. Compared to MOON, which also learns from global model, \systemplus shows a more robust training process.

\begin{table}[tb]
\renewcommand\arraystretch{1.1}
\centering
\small
\caption{ Evaluation of FL methods at different communication round on CIFAR-10 with C=0.2, $\alpha=0.1$.}
\begin{tabular}{ccccc}
\hline
\multirow{2}{*}{Method}  & \multicolumn{4}{c}{Test Accuracy at different communication round} \\

\cline{2-5}
   & 10 & 20 & 50 & 100  \\
  \hline
  
  FedAvg  &36.30$\pm$1.24 &49.54$\pm$0.99 &46.47$\pm$1.67 &40.09$\pm$2.29 \\
  \hline
  FedProx  &39.81$\pm$1.60 &46.82$\pm$0.96 &45.55$\pm$2.31 &63.45$\pm$0.40 \\
  \hline
  MOON   &33.22$\pm$1.63 &43.55$\pm$1.12 &45.10$\pm$4.21 &44.57$\pm$1.08  \\
  \hline
  FedDistill$^{+}$ &35.96$\pm$1.75 &44.70$\pm$3.07 &40.02$\pm$1.66 &43.76$\pm$5.13 \\
  \hline
  FedGen  &37.87$\pm$0.40 &51.58$\pm$1.65 &47.66$\pm$0.97 &45.25$\pm$3.05   \\
  \hline
  \system &45.34$\pm$1.10 &52.05$\pm$0.56 &58.28$\pm$0.82 &64.84$\pm$1.57  \\
  \systemled  &\bf{47.91$\pm$1.06} & \bf{56.79$\pm$0.81} &\bf{59.88$\pm$1.07}& \bf{64.87$\pm$1.66}  \\
  \systemplus  &41.30$\pm$0.54 &54.34$\pm$1.44 &50.95$\pm$0.89 &62.51$\pm$1.68  \\
  \bottomrule
  \end{tabular}
 
 \label{tbl:acc_straggler}
\end{table}

\paragraph{Comments on Projection Layer}
As illustrated in MOON paper~\citep{li2021model}, the projection head adding before the classification layer is necessary for the method. We also conduct experiments by  adding the projection layer for MOON and \systemplus only, and found both of them performs even worse than FedAvg on CIFAR-100 due to the projection layer, as presented in Tab.~\ref{tbl:table_compare}. On CIFAR-10 and Tiny-Imagenet, MOON and \systemplus benefit from the projection head and significantly outperform other methods. But the \systemplus still outperforms MOON by 1.9\%, 1.5\%, and 2.1\% on CIFAR-10,CIFAR-100, and Tiny-ImageNet respectively when $\alpha$=0.1. And in Tab.~\ref{tbl:table1_complete}, we notice that with participation ratio growing, the performance of \system beats \systemplus, which also hints that adding MLP layer is not a general solution for all situations.
\paragraph{Effect of Network Architecture}
We conduct the experiments with different models. The improvement for ResNet shows that our proposed method could work on modern networks and the superiority compared to FedAvg becomes more clear as the scale of the model growing, which is supported by results conducted on Tiny-Imagenet (ResNet-50 is used). The advantage of FedProx is shown on the large model due to the parameters regularization. The performance of FedProx is 2.9\% higher than FedAvg and \system is over 2.7\% higher than FedProx when $\alpha$=0.1.  For NLP tasks, though FedAvg could easily achieve high performance based on the pre-trained model, our proposed model still shows its superiority when the fine-tuning task turns difficult, which is shown on SST-5 results.

\subsection{Ablation Studies}

\paragraph{Effect of Buffer Length Setting}
To understand the best buffer length setting for \system, we conduct the experiments on different buffer length as shown in Tab.~\ref{tbl:avg_teacher_buffer}. For \system, the communication cost would be constant regardless of the buffer size $M$, since the ensemble is proceeded on the server. Setting $M=5$ performs best on $\alpha=1$ and $\alpha=0.1$, which are higher than $M=1$ over 0.5\% on CIFAR-10. For \systemled, we find there is no more benefit by increasing the buffer length to 7, and it does not help significantly when the non-IID-ness is not a major issue or communication rounds are few. 

\begin{table}[!htb]
\centering
\small
\caption{Performance of different buffer length for \system, $C=0.2$.}
\begin{tabular}{lcccc}
\hline
  DataSet & Buffer Length &  $\alpha$=1 & $\alpha$=0.5 & $\alpha$=0.1\\
  \hline
   \multirow{4}{*}{CIFAR-10}& 1 & 78.58$\pm$0.61 & \bf{77.15$\pm$0.28} & 71.73$\pm$0.79  \\ 
    &3 & 78.63$\pm$0.19 & 76.91$\pm$0.05 & 72.12$\pm$0.41 \\
    &5 & \bf{79.14$\pm$0.72} & 76.54$\pm$0.17 & \bf{72.27$\pm$0.84}\\
    &7 & 78.72$\pm$0.73 & 77.02$\pm$0.43 & 72.24$\pm$0.21 \\
 \hline
  \multirow{4}{*}{CIFAR-100}& 1 & 42.77$\pm$0.37 & 41.76$\pm$0.06 & 36.50$\pm$0.12  \\ 
    &3 & 42.92$\pm$0.19 & 42.33$\pm$0.21 & 36.82$\pm$0.24 \\
    &5 & \bf{43.82$\pm$1.39} & 42.29$\pm$0.30 & \bf{36.83$\pm$0.20} \\
    &7 & 43.15$\pm$1.37 & \bf{42.44$\pm$0.32} & 36.65$\pm$0.64 \\
    \bottomrule
  \end{tabular}
 
 \label{tbl:avg_teacher_buffer}
\end{table}

We conduct the experiments on different buffer length for \systemled on CIFAR-10, C=0.2, and find the best buffer length varies under different settings as shown in Tab.~\ref{tbl:buffer_length}. 

\begin{table}[h]
\renewcommand\arraystretch{1.1}
\centering
\small
\caption{Performance of different buffer length for \systemled.}
\setlength{\tabcolsep}{2pt} 
\begin{tabular}{cccc}
\hline
  Buffer Length &  $\alpha$=1 & $\alpha$=0.5 & $\alpha$=0.1\\
  \hline
    1 & 78.58$\pm$0.61 & 77.15$\pm$0.28 & 71.73$\pm$0.79  \\ 
    3 & 78.48$\pm$0.39 & 76.63$\pm$0.40 & 72.27$\pm$0.05 \\
    5 & \bf{78.88$\pm$0.39} & \bf{77.44$\pm$0.66} & \bf{72.41$\pm$1.70}\\
    7 & 78.56$\pm$0.63 & 76.78$\pm$0.20 & 71.57$\pm$1.43 \\
    \bottomrule
  \end{tabular}
  
 \label{tbl:buffer_length}
\end{table}

\paragraph{Choice of Regularizer}
In this paper, the KL-divergence is used as the regularizer to control the discrepancy between the local logit and global logit distribution. We also consider the case when the regularization term being modified as means squared error (MSE) defined over the logit, and we tested the MSE reguarlizer on CIFAR-10, CIFAR-100 and SST-5 under $\alpha=0.1$ with the buffer size $M=1$. As shown in  Tab.~\ref{tbl:loss_type}, both choices outperform the FedAvg. However, when using the MSE as the regularizer, the accuracy is 1.6\% worse on CIFAR-10, comparable to SST-5, and  1.6\% higher on CIFAR-100 compare to the accuracy when using KL-divergence as the regularizer respectively. 
The convergence of the proposed algorithm is still guaranteed by the Lemma~\ref{lemma:wellposed} in the supplementary. 
\begin{table}[h]
\centering
\small
\caption{Performance of different loss types, under $\alpha=0.1$, $C=0.2$ setting, buffer size is set as $M=1$.}
\setlength{\tabcolsep}{3pt} 
\begin{tabular}{lccc}
\hline
  Loss Type & CIFAR-10 &CIFAR-100 & SST-5\\
  \hline
    None & 69.22$\pm$0.91 & 34.91$\pm$1.00 & 40.39$\pm$0.50 \\ 
    MSE  & 70.12$\pm$0.98 & \bf{38.19$\pm$0.77} & \bf{43.50$\pm$1.68} \\
    KL & \bf{71.73$\pm$0.79} & 36.50$\pm$0.12 & 43.24$\pm$1.80 \\
    \bottomrule
  \end{tabular}
 
 \label{tbl:loss_type}
\end{table}

\section{Conclusion}
In this paper, we propose a novel global knowledge distillation method \system that utilizes the knowledge learnt from past global models to mitigate the client drift issue. \system benefits from the ensemble and knowledge distillation mechanisms to produce a more accurate model.
We validate the efficacy of proposed methods on CV/NLP datasets under different non-IID settings. The experimental results show that \system\ outperforms several state-of-the-art methods and demonstrates the effect of reducing the drift issue.



\bibliography{aaai22}
\bibliographystyle{iclr2022_conference}

\newpage
\section{Proof of Convergence Analysis}
\begin{lemma}\label{lemma:wellposed}
The set $\{\w\in\R{d}| \w \text{ satisfies } \eqref{eq:inexactness}\}$ is non-empty.
\end{lemma}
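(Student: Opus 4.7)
The plan is to exhibit an explicit element of the set. Rather than working with the minimizer of the original local objective $m(\w;\w_t)$, where controlling the KL-gradient is awkward, I would introduce the quadratic-regularized surrogate
\begin{equation*}
g(\w) \;:=\; F_k(\w) + \frac{\gamma L_h}{2\delta}\,\norm{\w-\w_t}^2.
\end{equation*}
The key observation is that $\nabla g(\w)$ is precisely the vector whose norm appears on the left-hand side of \eqref{eq:inexactness}, evaluated at $\w_{t+1}^k = \w$. So producing a root of $\nabla g$ immediately produces a point in the set.

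Next I would check that $g$ has a global minimizer. By Assumption~\ref{ass.fun}.1 the Hessian satisfies $\nabla^2 F_k(\w) \succeq \lambda_{\min} I$ uniformly, hence
\begin{equation*}
\nabla^2 g(\w) \;\succeq\; \Bigl(\lambda_{\min} + \tfrac{\gamma L_h}{\delta}\Bigr) I \;=\; \kappa\, I,
\end{equation*}
with $\kappa$ the constant from Theorem~\ref{thm:convergence}. Under the hypothesis $\kappa>0$ of that theorem, $g$ is $\kappa$-strongly convex, therefore coercive, and admits a unique global minimizer $\w^\star\in\R{d}$. The first-order condition $\nabla g(\w^\star)=0$ then reads
\begin{equation*}
\nabla F_k(\w^\star) + \frac{\gamma L_h}{\delta}\bigl(\w^\star - \w_t\bigr) \;=\; 0,
\end{equation*}
so the choice $\w_{t+1}^k := \w^\star$ satisfies \eqref{eq:inexactness} with left-hand side equal to $0$, which is trivially $\le \eta\norm{\nabla F_k(\w_t)}$ for every $\eta\in[0,1)$. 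This yields the claim.

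I do not expect any serious technical obstacle here; the argument reduces to textbook strong-convexity plus coercivity. The only conceptually delicate point, worth spelling out in the write-up, is the interpretation of \eqref{eq:inexactness}: it is not a first-order condition for $m(\cdot;\w_t)$ itself, but for the surrogate $g$, where the coefficient $\gamma L_h/\delta$ plays the role of the Lipschitz constant of the KL-gradient term (available from the output lower bound $\delta$ and $L_h$-Lipschitzness of $h_k$ in Assumption~\ref{ass.fun}.3). Once this reinterpretation is made, no KL-specific estimates enter the existence proof, and the reliance on $\kappa>0$ is exactly the reliance already present in the convergence theorem.
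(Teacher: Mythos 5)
Your proof is correct and is built around the same object as the paper's: the quadratic surrogate $g(\w)=F_k(\w)+\frac{\gamma L_h}{2\delta}\norm{\w-\w_t}^2$ is exactly the paper's $\tilde m(\w;\w_t)$, and both arguments conclude by noting that a minimizer of this surrogate makes the left-hand side of \eqref{eq:inexactness} vanish. The differences are worth recording, though. The paper first shows $m(\w;\w_t)\leq \tilde m(\w;\w_t)$ by bounding each KL term by a weighted squared distance (citing an external lemma) and then invoking Assumption~\ref{ass.fun}.3; this comparison is what motivates the surrogate but is not actually needed for non-emptiness, and you correctly dispense with it. Conversely, the paper takes for granted that the minimizer of $\tilde m$ exists, which is the one genuine gap in its argument; your invocation of $\kappa=\lambda_{\min}+\gamma L_h/\delta>0$ to get $\kappa$-strong convexity, coercivity, and hence a (unique) minimizer fills that gap cleanly, at the mild cost of importing a hypothesis of Theorem~\ref{thm:convergence} into the lemma. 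Net: same route, but your version is the more complete existence proof, while the paper's version additionally records the inequality $m\leq\tilde m$ that is used to interpret the surrogate as a majorizer of the true local objective.
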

\begin{proof}
Note that 
\begin{align*}
    m(\w;\w_{t})
    &=F_k(\w) + \frac{\gamma}{2n_k}\sum_{i=1}^{n_k} \KL(h_k(\w_{t},x_{ki}) || h_k(\w,x_{ki}))\\
    &\leq F_k(\w) + \frac{\gamma}{2n_k}\sum_{i=1}^{n_k} \frac{\norm{h_k(\w_t,x_{ki}) - h_k(\w,x_{ki}) }^2}{\min_{j\in\{1,\cdots, \mathcal{C}\}} h_k(\w,x_{ki})}\\
    &\leq F_k(\w) + \frac{\gamma L_h}{2\delta}\norm{\w_t - \w }^2\overset{\Delta}{=}\tilde m(\w;\w_{t}),
 \end{align*}
 where the first inequality follows from \cite[Lemma 11.6]{klemela2009smoothing} and the second inequality follows from the Assumption~\ref{ass.fun}.3. Notice that for any approximate solution $\w^k_{t+1}$ satisfies $\tilde m(\w^k_{t+1};\w_{t}) \leq \tilde m(\w^t;\w_{t})$, then
$$
m(\w^k_{t+1};\w_t) \leq \tilde m(\w^k_{t+1};\w_{t}) \leq \tilde m(\w^t;\w_{t}) = m(\w^t;\w^t),
$$
which implies \eqref{eq:inexactness} will be satisfied at least by the minimizer of $\tilde T_k(\w, \w^t)$.
\end{proof}

\textbf{Proof of Theorem~\ref{thm:convergence}.}
\begin{proof}
By Assumption\ref{ass.algo}, for the $k$th client, 
$$
\begin{aligned}
 & \nabla F_k(\w_{k}^{t+1}) + \frac{\gamma L_h}{\delta}(\w_{k}^{t+1}-\w^t) + e_{k}^{t+1} = 0\\
 & \norm{e_{k}^{t+1}} \leq \eta \norm{\w^{t}}.
\end{aligned}
$$
Provided $\gamma, \eta$ and $S$ are chosen as described, then one can use the same proof in \cite[Theorem 4]{li2018federated} to show that
$$
\E_{S_t}[f(\w_{t+1})] \leq f(\w_t) - \rho \norm{\nabla f(\w_t)}^2.
$$
Then take the total expectation with respect to all randomness and by telescoping, one reaches
$$
\rho \sum_{t=0}^{T-1}\E[\norm{\nabla f(\w_t)}]^2 \leq f(\w_0)-f(\w^*).
$$
Divide $T$ on both sides, then
$$
\min_{t\in[T]} \E[\norm{\nabla f(\w_t)}]\leq \frac{1}{T}\sum_{t=0}^{T-1}\E[\norm{\nabla f(\w_t)}]^2  \leq \frac{f(\w_0)-f(\w^*)}{\rho T}.
$$
\end{proof}

\section{Extended Experiments}


\paragraph{Toy Example for FedAvg Limitation Illustration}
Fig.~\ref{fig:toy_expamle} provides a illustration of the limitation in FedAvg. We consider a 4-class classification task with a 3-layer MLP, the optimizer used is Adam. The 2-dim data points are randomly generalized in (-4,4). We notice that for FedAvg, the model overfits the local data and results a bias for the global model, while adding the global models distillation term would relieve the overfitting phenomenon.

\begin{figure*}[htb]
    \centering
    \begin{subfigure}[b]{0.24\textwidth}
        \centering
        \includegraphics[width=\textwidth]{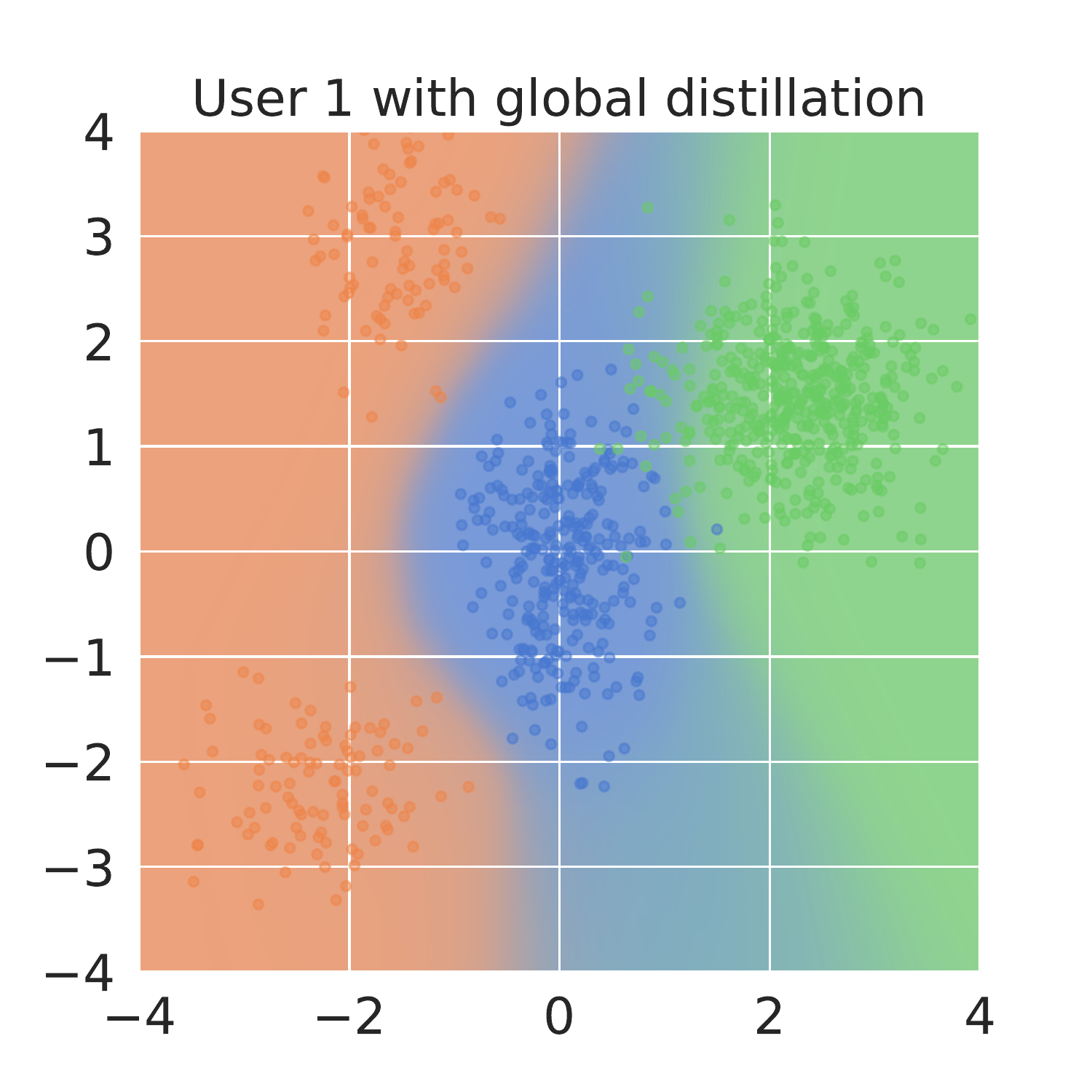}
        \caption{user 1}\label{fig:user_KD_bound_1}
    \end{subfigure}
    \begin{subfigure}[b]{0.24\textwidth}
        \centering
        \includegraphics[width=\textwidth]{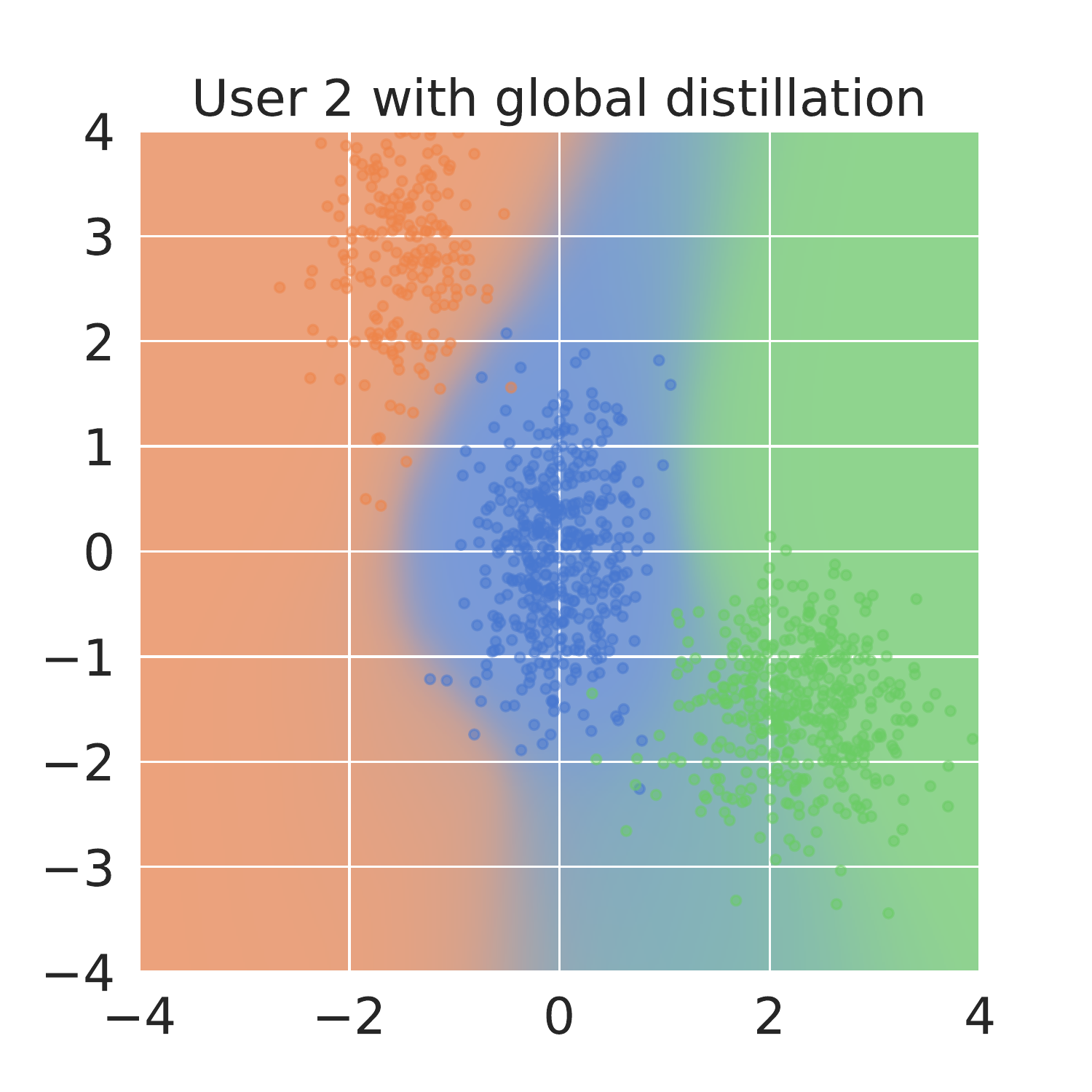}
        \caption{user 2}\label{fig:user_KD_bound_2}
    \end{subfigure}
    \begin{subfigure}[b]{0.24\textwidth}
        \centering
        \includegraphics[width=\textwidth]{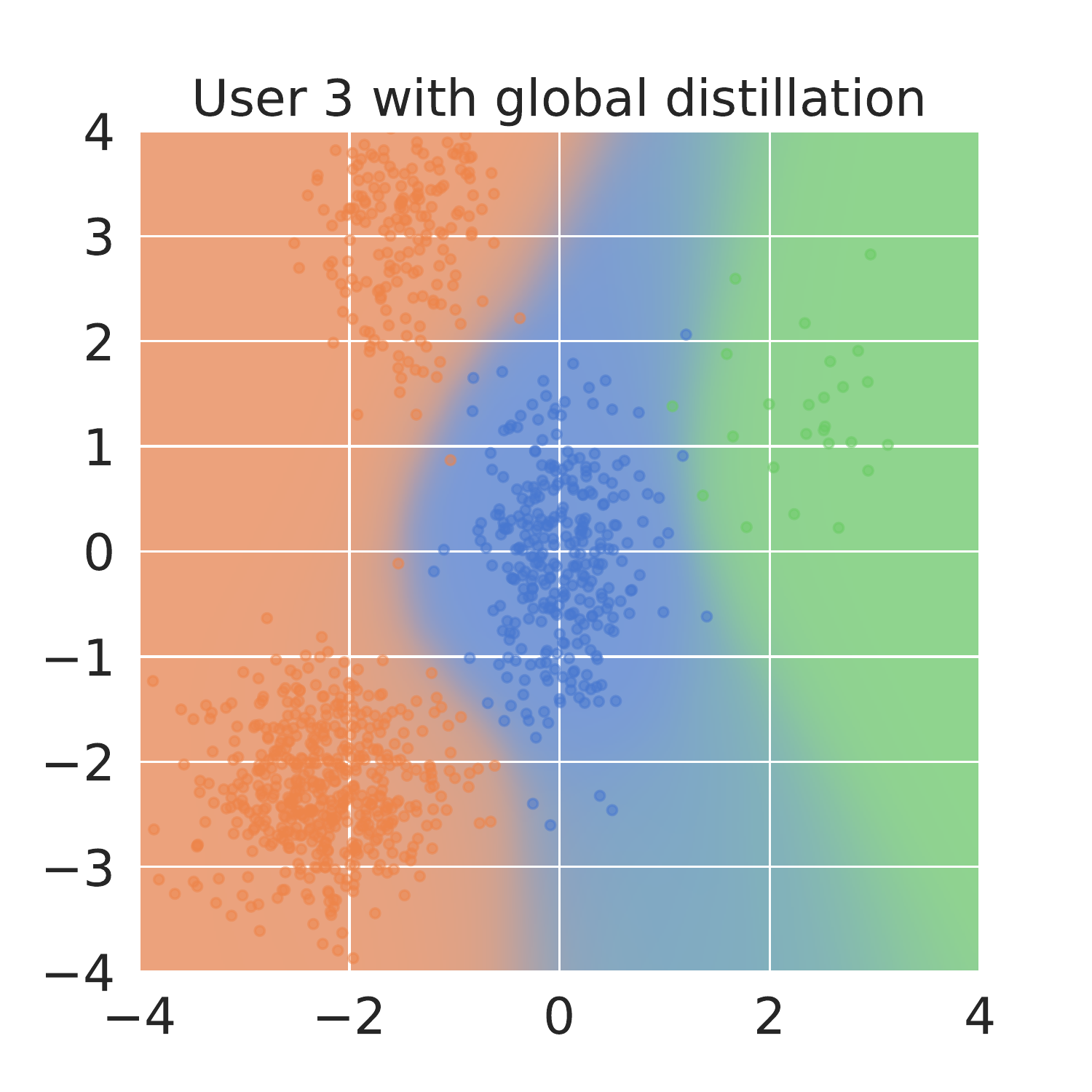}
        \caption{user 3}\label{fig:user_KD_bound_3}
    \end{subfigure} 
    \begin{subfigure}[b]{0.24\textwidth}
        \centering
        \includegraphics[width=\textwidth]{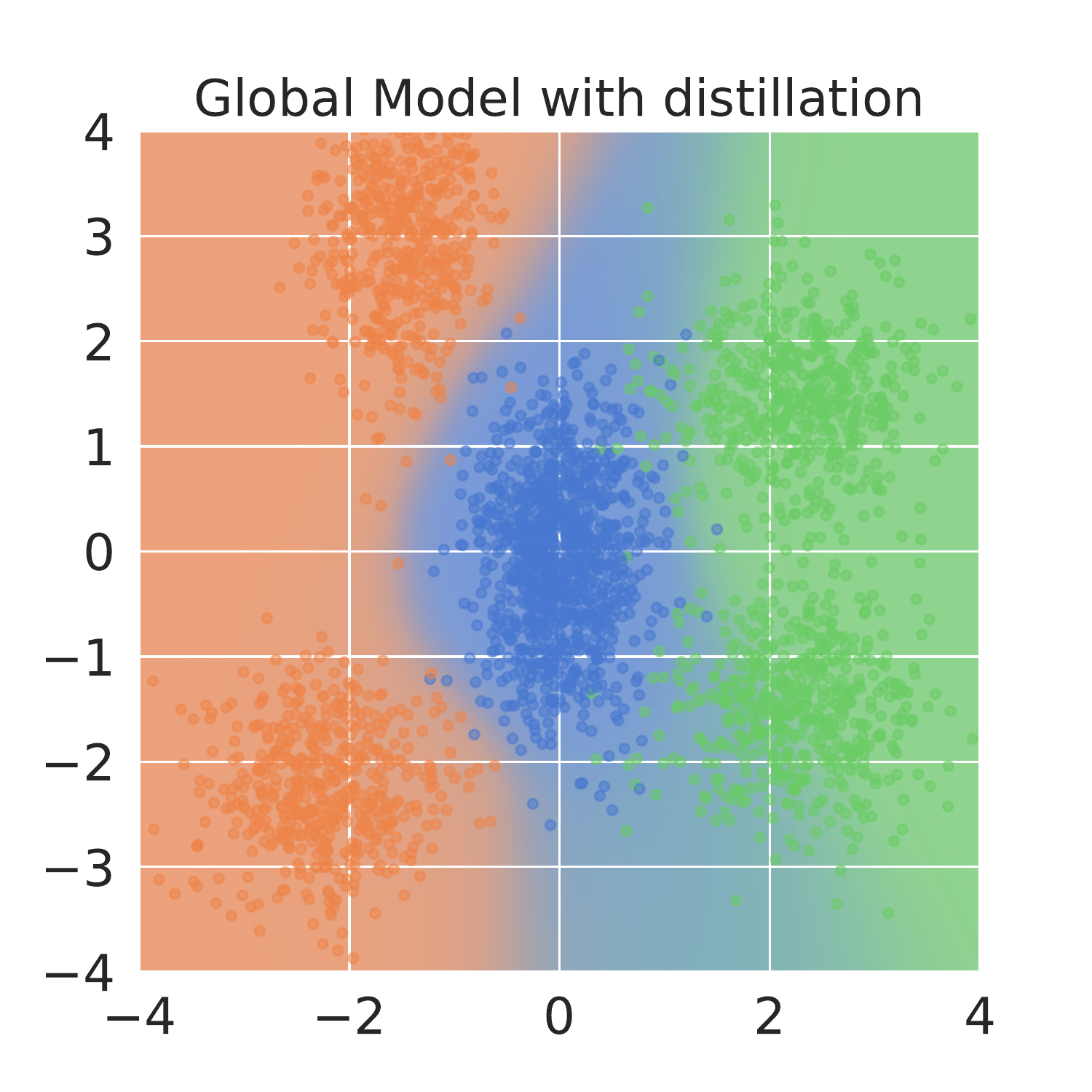}
        \caption{\system Bound}\label{fig:avg_KD_bound}
    \end{subfigure}
    \begin{subfigure}[b]{0.24\textwidth}
        \centering
        \includegraphics[width=\textwidth]{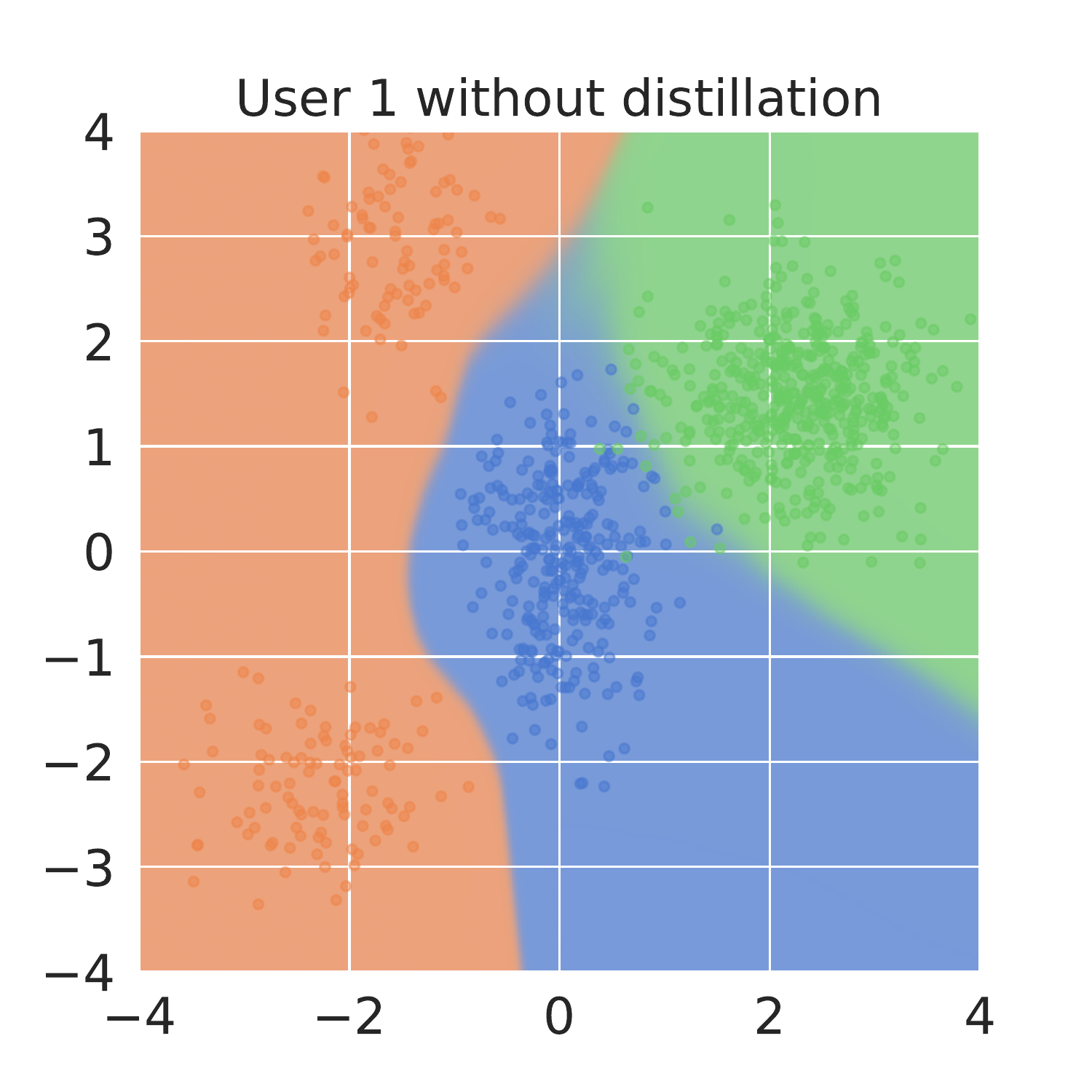}
        \caption{user 1}\label{fig:user_noKD_bound_1}
    \end{subfigure}
    \begin{subfigure}[b]{0.24\textwidth}
        \centering
        \includegraphics[width=\textwidth]{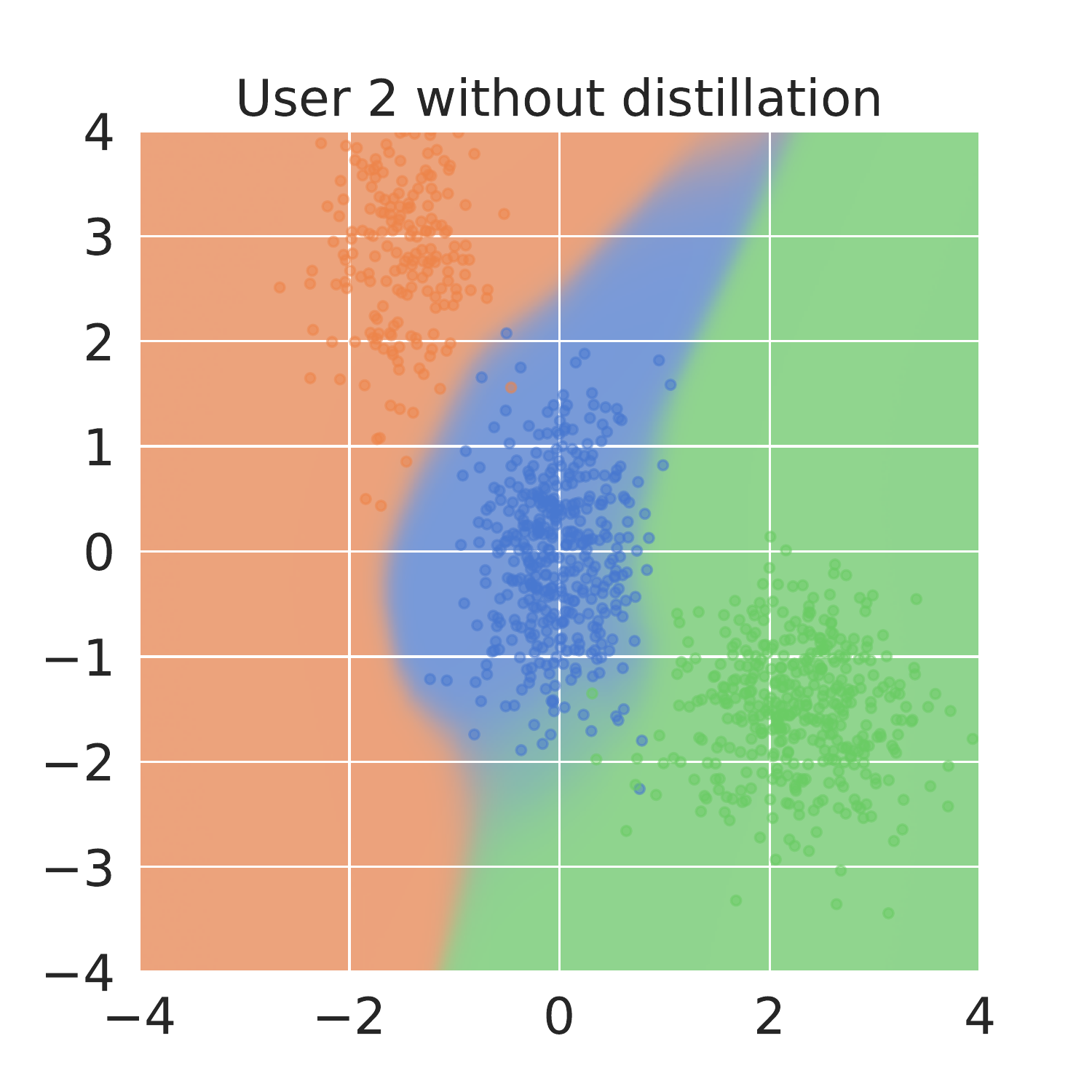}
        \caption{user 2}\label{fig:user_noKD_bound_2}
    \end{subfigure}
    \begin{subfigure}[b]{0.24\textwidth}
        \centering
        \includegraphics[width=\textwidth]{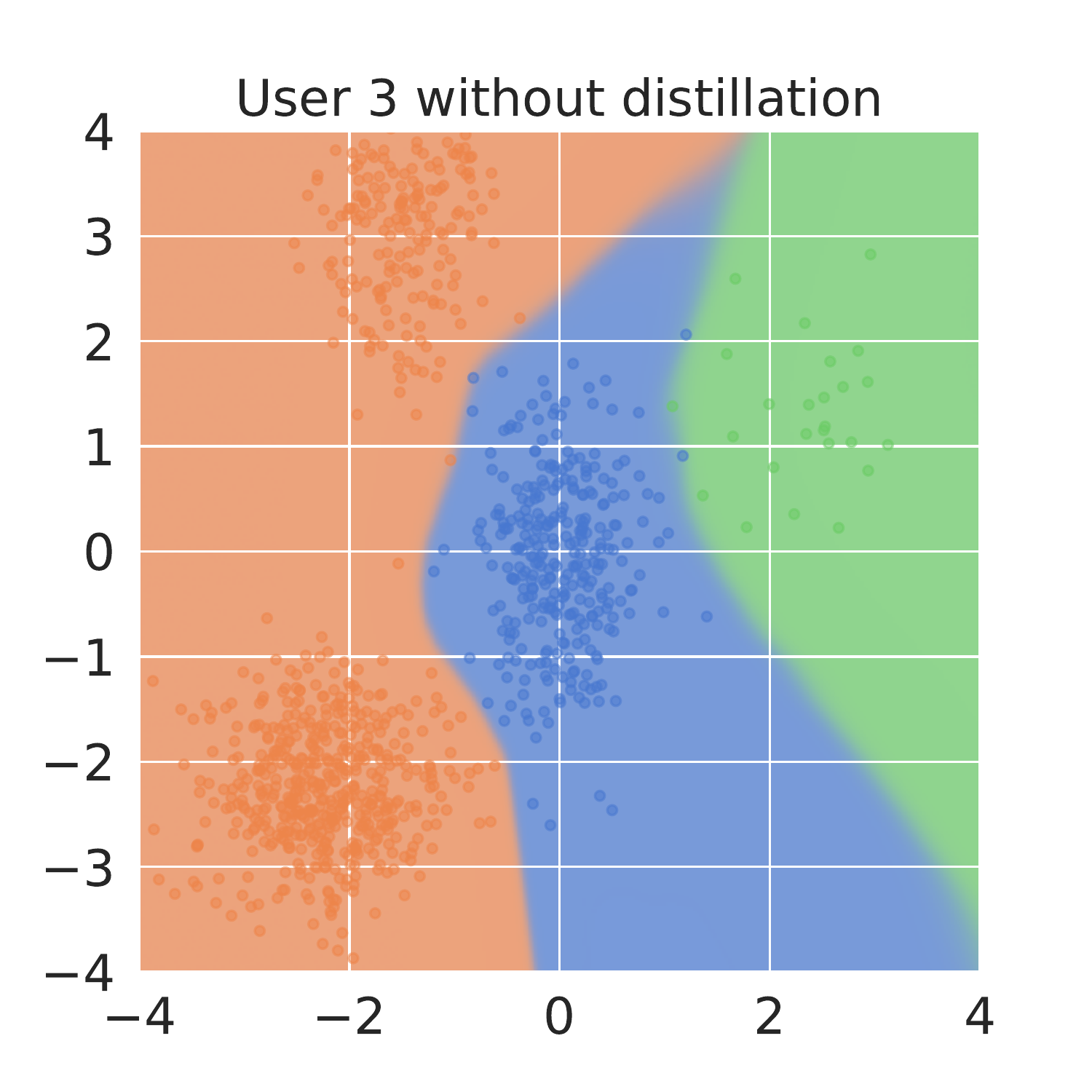}
        \caption{user 3}\label{fig:user_noKD_bound_3}
    \end{subfigure}
    \begin{subfigure}[b]{0.24\textwidth}
        \centering
        \includegraphics[width=\textwidth]{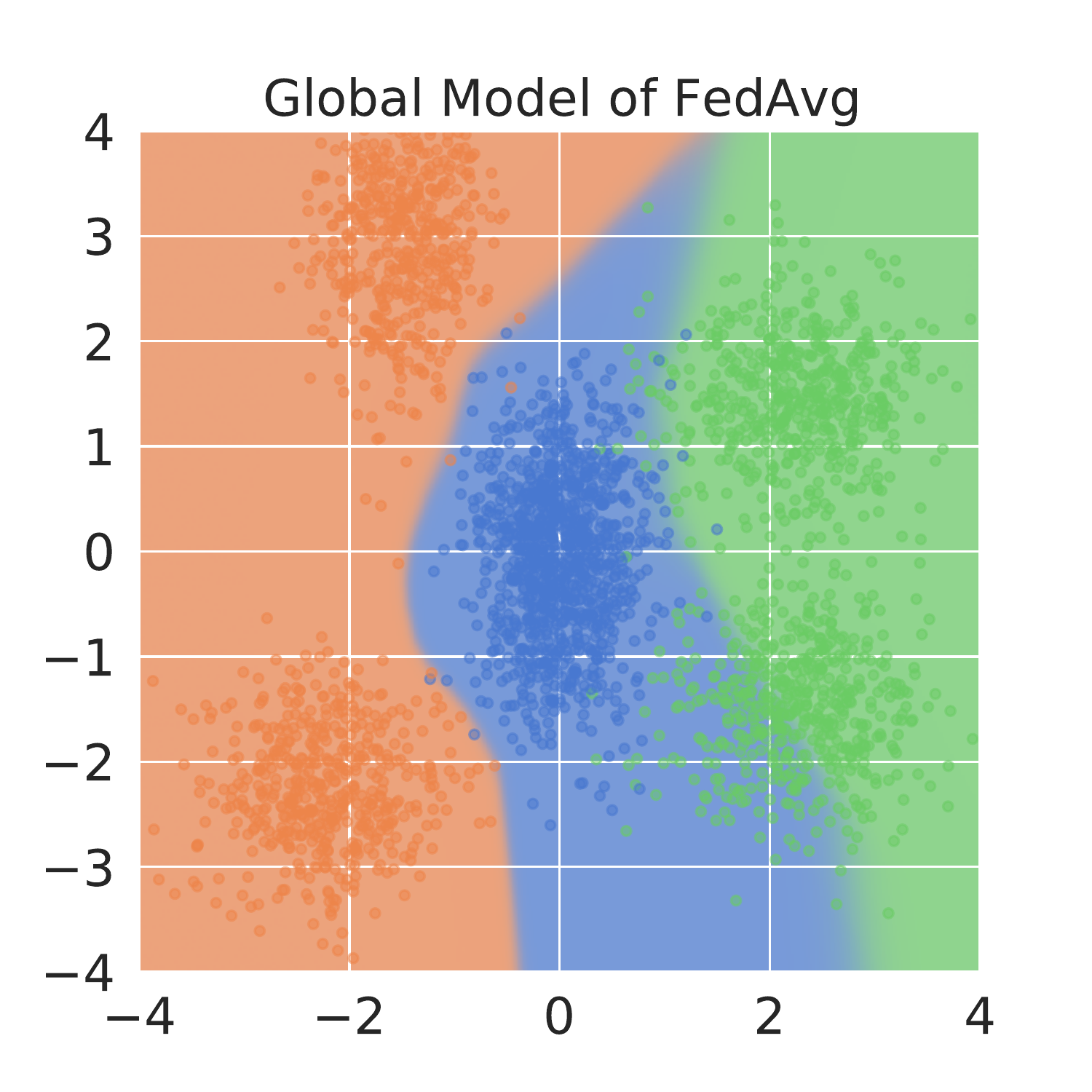}
        \caption{FedAvg Bound}\label{fig:avg_bound}
    \end{subfigure} 
    \caption{The decision bound for \system and FedAvg on toy examples.}\label{fig:toy_expamle}
\end{figure*}

\end{document}